\documentclass{article}
\usepackage[utf8]{inputenc} 
\usepackage[T1]{fontenc}    
\usepackage{hyperref}       
\usepackage{url}            
\usepackage{booktabs}       
\usepackage{amsfonts}       
\usepackage{nicefrac}       
\usepackage{microtype}      
\usepackage{xcolor}         

\usepackage[utf8]{inputenc}
\usepackage[T1]{fontenc}
\usepackage{amsmath,amssymb,amsthm,mathtools}
\usepackage{graphicx}
\usepackage{algorithm,algorithmicx,algpseudocode}
\usepackage{booktabs,multirow}
\usepackage{caption,subcaption}
\usepackage{enumitem}
\usepackage{natbib}
\usepackage{hyperref}

\newtheorem{proposition}{Proposition}
\newtheorem{theorem}{Theorem}
\newtheorem{corollary}{Corollary}

\title{A Unified Pair‑GRPO Family: From Implicit to Explicit Preference Constraints for Stable and General RL Alignment}
\author{Hao Yu\\
         Department of Automation, Tsinghua University\\
         dustless2014@163.com}
\date{}

\begin{document}
\maketitle

\begin{abstract}
Large language model (LLM) alignment via reinforcement learning from human preferences (RLHF) suffers from unstable policy updates, ambiguous gradient directions, poor interpretability, and high gradient variance in mainstream pairwise preference learning paradigms. To systematically address these limitations, we establish a unified theoretical framework for preference‑based RL optimization centered on the \textbf{Pair‑GRPO family}, comprising two tightly coupled variants: \textbf{Soft‑Pair‑GRPO} and \textbf{Hard‑Pair‑GRPO}. Soft‑Pair‑GRPO is a minimal modification of Group Relative Policy Optimization (GRPO) that replaces group‑normalized scalar rewards with binary pairwise preference rewards ($+1/-1$), retaining GRPO’s clipped surrogate and KL‑regularized structure. We prove a critical \textit{gradient equivalence theorem}: under first‑order Taylor expansion around the current policy, Soft‑Pair‑GRPO’s gradient is a positive scalar multiple of standard GRPO’s gradient, explaining its empirical stability despite discarding continuous reward magnitudes. Building on this foundation, we propose \textbf{Hard‑Pair‑GRPO}, an advanced variant introducing explicit local probability constraints and constrained KL‑fitting optimization to further suppress gradient noise and global policy drift. We provide comprehensive theoretical guarantees for both variants—including monotonic policy improvement, deterministic gradient directionality, gradient‑variance reduction, and dynamic step‑size convergence. Extensive experiments on standard LLM alignment benchmarks (HH‑RLHF, UltraFeedback) and the MuJoCo continuous control task HalfCheetah‑v4 demonstrate that our Pair‑GRPO family consistently outperforms state‑of‑the‑art baselines in alignment quality, human preference win rate, training stability, and generalization to general reinforcement learning. Ablation studies validate the critical contributions of each core component, and we release full reproducible PyTorch implementation code to facilitate future research.
\end{abstract}

\section{Introduction}
Reinforcement Learning from Human Preferences (RLHF) has become the de‑facto standard for aligning large language models (LLMs) with human intentions, values, and conversational norms \citep{ziegler2019fine,ouyang2022training}. A typical RLHF pipeline consists of three stages: supervised fine‑tuning (SFT) on high‑quality demonstrations, reward model training on human pairwise preference data, and reinforcement learning (RL) optimization against the learned reward signal. Among RL optimization variants, \textbf{Group Relative Policy Optimization (GRPO)} has gained widespread adoption for LLM alignment due to its simplicity, computational efficiency, and elimination of the separate critic network required by Proximal Policy Optimization (PPO) \citep{shao2024grpo}. Standard GRPO reduces policy‑update variance by normalizing rewards within groups of responses sampled from the same prompt, guiding policy improvement via implicit reward‑weighted probability adjustment.

\subsection{Limitations of Existing Pairwise RL Paradigms}
Human feedback is inherently collected as pairwise binary comparisons ($a \succ b$), yet mainstream RLHF paradigms fail to fully exploit this fundamental structure, leading to three core limitations:
\begin{enumerate}
    \item \textbf{Ambiguous and high‑variance gradient directions}: Standard GRPO relies on continuous group‑normalized scalar rewards, allowing unrestricted global probability shifts across the entire response space. Stochastic gradient signals are noisy and lack directional determinism, leading to unstable policy updates and slow convergence.
    \item \textbf{Redundant continuous reward information}: The absolute magnitude of scalar rewards is often arbitrary and redundant for preference learning; only relative pairwise ordering matters. This introduces unnecessary noise and sensitivity to reward‑model scaling.
    \item \textbf{Uncontrolled global policy drift}: Implicit reward‑based objectives do not isolate optimization signals to the critical preference pair $(a,b)$, allowing irrelevant responses to interfere with policy updates and dilute alignment signals.
\end{enumerate}

\subsection{Core Contributions}
To overcome these challenges, we establish a unified theoretical and empirical framework for preference‑based RL optimization centered on the \textbf{Pair‑GRPO family}, with four key contributions:
\begin{enumerate}
    \item \textbf{Soft‑Pair‑GRPO \& Gradient Equivalence Theorem}: We formalize Soft‑Pair‑GRPO, a minimal modification of GRPO that uses binary pairwise rewards ($+1/-1$). We prove a surprising \textit{gradient equivalence result}: under first‑order Taylor expansion, Soft‑Pair‑GRPO’s gradient is a positive scalar multiple of GRPO’s gradient, explaining its stability despite discarding continuous reward magnitudes.
    \item \textbf{Hard‑Pair‑GRPO with Explicit Constraints}: Building on Soft‑Pair‑GRPO, we propose Hard‑Pair‑GRPO, which introduces explicit local probability constraints and constrained KL‑fitting optimization to eliminate global drift and further reduce gradient variance.
    \item \textbf{Comprehensive Theoretical Guarantees}: We rigorously prove monotonic policy improvement, deterministic gradient directionality, gradient‑variance reduction, and dynamic step‑size convergence for both family members, establishing a unified theoretical foundation.
    \item \textbf{Empirical Validation Across Scenarios}: Extensive experiments on LLM alignment and general continuous‑control RL validate the effectiveness and generality of the Pair‑GRPO family, with ablation studies dissecting core component contributions.
\end{enumerate}

The remainder of the paper is structured as follows: Section \ref{sec:related} reviews related work. Section \ref{sec:prelim} introduces preliminaries and notation. Section \ref{sec:method} details the Pair‑GRPO family, including the critical gradient equivalence theorem and full theoretical analysis. Section \ref{sec:impl} presents implementation details and reproducibility. Section \ref{sec:exp} reports experimental results. Section \ref{sec:discussion} discusses family properties and design trade‑offs. Section \ref{sec:conclusion} concludes and outlines future work.

\section{Related Work}
\label{sec:related}
\subsection{RLHF and Trust‑Region Policy Optimization}
The RLHF paradigm was pioneered by InstructGPT \citep{ouyang2022training}, which uses PPO to fine‑tune LLMs against a human‑trained reward model. PPO enforces trust‑region constraints via KL regularization or clipped objectives to prevent catastrophic policy updates and guarantee monotonic improvement \citep{schulman2017proximal}. However, PPO‑based RLHF suffers from high computational overhead and reward hacking. GRPO \citep{shao2024grpo} addresses these issues by eliminating the critic network and normalizing rewards within prompt‑level groups, forming the baseline for our Pair‑GRPO family.

\subsection{Pairwise Preference Optimization for RL Alignment}
Direct Preference Optimization (DPO) \citep{rafailov2023direct}, ORPO \citep{yuan2024orpo}, and IPO \citep{gao2023implicit} bypass explicit reward modeling and optimize contrastive losses directly on preference pairs. Unlike these methods, our Pair‑GRPO family retains the PPO/GRPO trust‑region framework, enhancing it with structured pairwise reward signals and explicit distributional constraints, inheriting strong monotonic improvement guarantees while simplifying preference encoding.

\subsection{Implicit vs. Explicit Preference Constraints}
Our work draws a key distinction between \textit{implicit reward‑based preference encoding} (Soft‑Pair‑GRPO) and \textit{explicit distribution‑based preference constraints} (Hard‑Pair‑GRPO). This distinction mirrors the broader contrast between reward‑weighted policy gradients and constrained distribution‑fitting objectives, unifying them within a single family for systematic comparison and theoretical analysis.

\section{Preliminaries}
\label{sec:prelim}
\subsection{Notation}
\begin{itemize}
    \item $\mathcal{S}$: Prompt/state space.
    \item $\mathcal{A}$: Response/action space.
    \item $\pi_\theta(a|s)$: Stochastic policy parameterized by $\theta$.
    \item $\pi_{\text{old}} = \pi_{\theta_{\text{old}}}$: Frozen reference policy from the previous iteration.
    \item Preference pair $(s,a_p,a_r)$: Response $a_p$ is strictly preferred over $a_r$, denoted $a_p \succ a_r$.
    \item $\mathbb{D}_{\text{KL}}(P \parallel Q)$: KL divergence between distributions $P$ and $Q$.
    \item $\beta$: Trust‑region KL threshold limiting policy update magnitude.
\end{itemize}

\subsection{KL Divergence and Trust‑Region Constraint}
The KL divergence between two conditional distributions $\pi_1(\cdot|s)$ and $\pi_2(\cdot|s)$ is:
\[
\mathbb{D}_{\text{KL}}(\pi_1 \parallel \pi_2) = \mathbb{E}_{a \sim \pi_1(\cdot|s)}\left[\log\frac{\pi_1(a|s)}{\pi_2(a|s)}\right].
\]
The trust‑region constraint restricts policy updates to ensure stability:
\[
\mathbb{D}_{\text{KL}}(\pi_\theta \parallel \pi_{\text{old}}) \leq \beta.
\]

\subsection{Standard GRPO}
For a given prompt $s$, GRPO samples a group of $K$ responses $\{a_1, \dots, a_K\}$. The reward for each response is normalized within the group:
\[
r_i^{\text{GRPO}} = \frac{R(a_i) - \mu_R}{\sigma_R},
\]
where $\mu_R, \sigma_R$ are the group‑wise reward mean and standard deviation. The clipped surrogate objective is:
\[
\mathcal{L}_{\text{GRPO}}(\theta) = \mathbb{E}_{(s,a) \sim \pi_{\text{old}}}\left[\min\left(\rho(s,a) r^{\text{GRPO}}, \text{clip}(\rho(s,a) r^{\text{GRPO}}, 1-\epsilon, 1+\epsilon)\right)\right],
\]
with probability ratio $\rho(s,a) = \pi_\theta(a|s) / \pi_{\text{old}}(a|s)$. A global KL penalty augments the objective for stability.

\section{The Unified Pair‑GRPO Family}
\label{sec:method}
We formalize the Pair‑GRPO family, starting with the foundational Soft‑Pair‑GRPO variant and its critical gradient equivalence theorem, then advancing to the constrained Hard‑Pair‑GRPO variant with full theoretical guarantees.

\subsection{Soft‑Pair‑GRPO: Implicit Pairwise Preference Constraints}
Soft‑Pair‑GRPO is a minimal modification of standard GRPO that aligns the optimization objective directly with the binary nature of human feedback. It retains GRPO’s clipped surrogate objective and KL‑regularized structure, but replaces the continuous group‑normalized reward with a binary pairwise preference reward.

\subsubsection{Algorithm Definition}
Given a preference pair $(s,a_p,a_r)$ with $a_p \succ a_r$, define the pairwise reward as:
\[
r^{\text{Soft}}(a) =
\begin{cases}
+1, & a = a_p, \\
-1, & a = a_r.
\end{cases}
\]
The Soft‑Pair‑GRPO objective is:
\[
\mathcal{L}_{\text{Soft‑Pair‑GRPO}}(\theta) = \mathbb{E}_{(s,a_p,a_r) \sim \pi_{\text{old}}}\left[\min\left(\rho(s,a) r^{\text{Soft}}, \text{clip}(\rho(s,a) r^{\text{Soft}}, 1-\epsilon, 1+\epsilon)\right)\right] + \beta \mathbb{D}_{\text{KL}}(\pi_\theta \parallel \pi_{\text{old}}).
\]

\begin{algorithm}[t]
\caption{Soft‑Pair‑GRPO}
\label{alg:soft_pair_grpo}
\begin{algorithmic}[1]
\State \textbf{Initialize}: Policy $\pi_\theta$, reference $\pi_{\text{old}}$, hyperparameters $\epsilon, \beta, \eta$
\For{training iteration}
    \State Collect preference pairs $\{(s_i, a_{p,i}, a_{r,i})\}$ from $\pi_{\text{old}}$
    \State Compute pairwise reward $r^{\text{Soft}} \in \{+1, -1\}$ for each response
    \State Compute clipped surrogate objective $\mathcal{L}_{\text{Soft‑Pair‑GRPO}}(\theta)$ with KL penalty
    \State Update policy: $\theta \leftarrow \theta - \eta \nabla_\theta \mathcal{L}_{\text{Soft‑Pair‑GRPO}}(\theta)$
    \State Sync reference policy: $\pi_{\text{old}} \leftarrow \pi_\theta$
\EndFor
\State \textbf{Return}: Optimized policy $\pi_\theta$
\end{algorithmic}
\end{algorithm}

\subsubsection{Key Theoretical Result: Gradient Equivalence Theorem}
A critical and initially counter‑intuitive property of Soft‑Pair‑GRPO is its gradient equivalence to standard GRPO under mild conditions, which we formalize here.

\begin{theorem}[Gradient Equivalence under First‑Order Approximation]
\label{thm:gradient_equivalence}
Let $\pi_{\text{old}}$ be the current policy, and consider a preference pair $(s,a_p,a_r)$. Under a first‑order Taylor expansion of the reward function $R(a)$ around the current policy’s expected reward values, the gradient of Soft‑Pair‑GRPO is a positive scalar multiple of the gradient of standard GRPO:
\[
\nabla_\theta \mathcal{L}_{\text{Soft‑Pair‑GRPO}} \approx C(\pi_{\text{old}}) \cdot \nabla_\theta \mathcal{L}_{\text{GRPO}},
\]
where the dynamic scaling constant is defined as:
\[
C(\pi_{\text{old}}) = \mathbb{E}_{(s,a_p,a_r) \sim \pi_{\text{old}}}\left[ \frac{R(a_p) - R(a_r)}{2\sigma_R} \right] > 0.
\]
\end{theorem}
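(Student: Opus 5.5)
The plan is to compare the two gradients at the expansion point $\theta=\theta_{\text{old}}$, one preference pair at a time, and then average over pairs. At $\theta=\theta_{\text{old}}$ we have $\rho(s,a)=1$, so the clip is inactive and the $\min$ selects the unclipped term $\rho\, r$. The KL penalty $\beta\,\mathbb{D}_{\text{KL}}(\pi_\theta\parallel\pi_{\text{old}})$ has zero gradient there, since the KL is minimized at $\pi_\theta=\pi_{\text{old}}$. Both gradients therefore reduce to reward-weighted score functions. Writing $g_p=\nabla_\theta\log\pi_\theta(a_p|s)$ and $g_r=\nabla_\theta\log\pi_\theta(a_r|s)$, the Soft-Pair-GRPO contribution of a pair is exactly $g_p-g_r$, because $r^{\text{Soft}}=\pm1$. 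The GRPO contribution of the same two responses, viewed as members of a group with statistics $\mu_R,\sigma_R$, is $\frac{R(a_p)-\mu_R}{\sigma_R}g_p+\frac{R(a_r)-\mu_R}{\sigma_R}g_r$.

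Next I apply the first-order Taylor step around the expected reward $\mu_R$. To first order the pair is symmetric about the current mean: $R(a_p)-\mu_R\approx\tfrac12\bigl(R(a_p)-R(a_r)\bigr)$ and $R(a_r)-\mu_R\approx-\tfrac12\bigl(R(a_p)-R(a_r)\bigr)$. The pair-centred deviation is antisymmetric in $(a_p,a_r)$, and the residual $\tfrac12(R(a_p)+R(a_r))-\mu_R$ is second order. Substituting collapses the GRPO pair term to $c(s,a_p,a_r)\,(g_p-g_r)$ with $c=\frac{R(a_p)-R(a_r)}{2\sigma_R}$. Since $a_p\succ a_r$ means $R(a_p)>R(a_r)$, every $c$ is positive.

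The step I expect to be the main obstacle is converting this pairwise identity into the stated form $\nabla\mathcal{L}_{\text{Soft}}\approx C\,\nabla\mathcal{L}_{\text{GRPO}}$ with $C=\mathbb{E}[c]$. The per-pair computation naturally gives the GRPO gradient as a $c$-weighted Soft gradient, so care is needed about which side the constant sits on. My first attempt is a mean-field (first-order) decoupling: replace the random weight $c$ by its expectation $C(\pi_{\text{old}})$, so that $\mathbb{E}[c\,(g_p-g_r)]\approx C\,\mathbb{E}[g_p-g_r]$. Dropping the covariance term $\mathrm{Cov}(c,g_p-g_r)$ is of the same order as the Taylor remainder. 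Both objectives are then expressed in terms of the single shared direction $v=\mathbb{E}[g_p-g_r]$.

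To land exactly on the paper's orientation, I would use the normalization convention implicit in the pairwise GRPO setting. When the group is the pair itself ($K=2$), $\sigma_R=\tfrac12|R(a_p)-R(a_r)|$ and the normalized GRPO rewards are exactly $\pm1$. In that case $c\equiv1$, and the two gradients coincide, consistent with any reading of $C$. For general groups I would state the result as an identity between $\nabla\mathcal{L}_{\text{Soft}}$ and the GRPO gradient expressed per unit of the normalized GRPO advantage, i.e. $\nabla\mathcal{L}_{\text{GRPO}}/\mathbb{E}[c]^2\cdot\mathbb{E}[c]$, recovering $\nabla\mathcal{L}_{\text{Soft}}\approx C\,\nabla\mathcal{L}_{\text{GRPO}}$. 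If that bookkeeping fails, I would fall back to the conclusion that is robust under either orientation: the two gradients are positively collinear along $v$, with the positive scaling determined by $C(\pi_{\text{old}})>0$. I would flag explicitly that the precise placement of $C$ depends on the reward-normalization convention.
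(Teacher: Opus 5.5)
\textbf{Where your argument matches the paper.} Your core argument is the paper's argument made explicit. The paper also compares the GRPO pair signal $\Delta R/\sigma_R$ with the Soft signal $(+1)-(-1)=2$. Its ``first-order Taylor expansion'' amounts to declaring $\Delta R$ approximately constant over the batch, which is exactly your replacement of the per-pair weight $c$ by $C=\mathbb{E}[c]$. Your extra care supplies details the paper skips: you evaluate at $\theta_{\text{old}}$ so the clip is inactive and the KL gradient vanishes, and you write the GRPO term with individually centred rewards.

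\textbf{The step that fails.} The gap is the step meant to land on the stated orientation. Since $\nabla\mathcal{L}_{\text{GRPO}}/\mathbb{E}[c]^2\cdot\mathbb{E}[c]=\nabla\mathcal{L}_{\text{GRPO}}/C\approx\nabla\mathcal{L}_{\text{Soft}}$, what you actually get is $\nabla\mathcal{L}_{\text{Soft}}\approx C^{-1}\nabla\mathcal{L}_{\text{GRPO}}$. Getting a factor $C$ instead would require renaming $C^{-2}\nabla\mathcal{L}_{\text{GRPO}}$ as ``the GRPO gradient''. That is a rescaling chosen to make the identity true, not a normalization convention present in the setup. With $C=\mathbb{E}[(R(a_p)-R(a_r))/(2\sigma_R)]$ as defined, the displayed equation follows from these approximations only when $C=1$ (e.g. your $K=2$ case).

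Drop that step. Your natural conclusion $\nabla\mathcal{L}_{\text{GRPO}}\approx C\,\nabla\mathcal{L}_{\text{Soft}}$ is exactly where the paper's own appendix derivation ends ($\nabla J_{\text{GRPO}}\approx C\cdot\nabla J_{\text{Soft}}$). So the statement as displayed has $C$ on the wrong side. What is genuinely established is positive collinearity, i.e. your fallback, and that is all the paper's proof actually stresses (``$C>0$, guaranteeing the gradient directions are identical'').

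\textbf{Assumptions to state as hypotheses.} Present the remaining approximations as hypotheses, not Taylor remainders, since nothing is small here.
\begin{itemize}
\item The residual $\tfrac12(R(a_p)+R(a_r))-\mu_R$ is not second order in any parameter. The covariance $\mathrm{Cov}(c,g_p-g_r)$ that you drop is not of the order of any remainder either. You are assuming that the pair is centred on the group mean and that the pair weight is uncorrelated with the score difference. The paper makes both assumptions silently, by writing the GRPO gradient directly in terms of $\Delta R$ and by calling $\Delta R$ ``approximately a constant for the batch''.
\item Positivity of $C$ needs the reward model to agree with the preference labels, at least on average. The relation $a_p\succ a_r$ is a label, not a property of $R$.
\item ``Clip inactive at $\rho=1$'' holds for the standard form $\min(\rho r,\mathrm{clip}(\rho,1-\epsilon,1+\epsilon)\,r)$. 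The paper's displayed objective instead clips $\rho r$ itself. Read literally, GRPO responses with $r^{\text{GRPO}}>1+\epsilon$ then have zero gradient at $\rho=1$, and proportionality breaks. State explicitly that you use the standard reading.
\end{itemize}
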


\noindent\textbf{Proof}:
The core optimization signal for both algorithms derives from the reward difference between the preferred and rejected response, $\Delta R = R(a_p) - R(a_r)$.
\begin{enumerate}
    \item For standard GRPO, the group‑normalized reward difference is $\Delta r^{\text{GRPO}} = \frac{\Delta R}{\sigma_R}$.
    \item For Soft‑Pair‑GRPO, the binary reward difference is $\Delta r^{\text{Soft}} = (+1) - (-1) = 2$.
\end{enumerate}
The policy gradient for both methods is proportional to $\nabla_\theta \log\pi(a|s) \cdot \Delta r$. Performing a first‑order Taylor expansion of $R(a)$ around the current policy’s expectation, the reward difference $\Delta R$ is approximately constant for a given preference pair, leading to a fixed positive scaling ratio between the two reward signals. The constant $C(\pi_{\text{old}})$ captures this ratio, which depends dynamically on the current policy’s reward statistics ($\mu_R, \sigma_R$) and the underlying reward‑model scale. Crucially, $C > 0$, guaranteeing the gradient directions are identical. $\qed$

\subsubsection{Core Theoretical Guarantees for Soft‑Pair‑GRPO}
Building on gradient equivalence, we derive three foundational properties.

\begin{theorem}[Monotonic Policy Improvement]
\label{thm:soft_monotonic}
Under the trust‑region constraint $\mathbb{D}_{\text{KL}}(\pi_\theta \parallel \pi_{\text{old}}) \leq \beta$, each Soft‑Pair‑GRPO update guarantees a monotonic non‑decrease in the policy’s expected return:
\[
J(\pi_\theta) \geq J(\pi_{\text{old}}).
\]
\end{theorem}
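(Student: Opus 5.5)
The plan is to follow the standard trust-region (TRPO/PPO) template and reduce the claim to a minorization--maximization argument. First I will fix what $J$ means here: $J(\pi)=\mathbb{E}_{s}\,\mathbb{E}_{a\sim\pi(\cdot|s)}[R(a)]$, with $R$ the same reward that appears in Theorem~\ref{thm:gradient_equivalence}. In this bandit-style (single-step) setting the performance-difference identity is exact:
\[
J(\pi_\theta)-J(\pi_{\text{old}})=\mathbb{E}_{s}\,\mathbb{E}_{a\sim\pi_{\text{old}}}\bigl[\rho(s,a)\,A_{\text{old}}(s,a)\bigr],
\qquad A_{\text{old}}(s,a)=R(a)-\mu_R(s),
\]
so the unclipped importance-weighted surrogate $L_{\text{old}}(\pi_\theta)$ equals the true improvement. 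The group-normalized reward $r^{\text{GRPO}}$ is exactly $A_{\text{old}}/\sigma_R$. Hence, up to the positive factor $\sigma_R$, the GRPO surrogate \emph{is} the improvement, as long as the ratio stays inside the clipping region.

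Second, I will connect Soft-Pair-GRPO to this surrogate through Theorem~\ref{thm:gradient_equivalence}. Since $\nabla_\theta\mathcal{L}_{\text{Soft-Pair-GRPO}}\approx C(\pi_{\text{old}})\nabla_\theta\mathcal{L}_{\text{GRPO}}$ with $C>0$, the Soft-Pair-GRPO step direction $d$ satisfies
\[
\langle d,\nabla_\theta J(\pi_{\text{old}})\rangle = C\,\sigma_R\,\|\nabla_\theta\mathcal{L}_{\text{GRPO}}\|^2\ \ge 0 .
\]
In words, each step is an ascent direction for $J$ at $\theta_{\text{old}}$. A first-order Taylor expansion then gives
\[
J(\pi_\theta)-J(\pi_{\text{old}})=\eta\,C\sigma_R\|\nabla\mathcal{L}_{\text{GRPO}}\|^2+O(\eta^2).
\]

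Third, I will control the remainder using the trust region. The second-order term in the step is bounded by a curvature constant times $\|\theta-\theta_{\text{old}}\|^2$. Locally, the KL divergence is a Fisher-metric quadratic, $\mathbb{D}_{\text{KL}}\approx\tfrac12\Delta\theta^\top F\Delta\theta$, so the constraint $\mathbb{D}_{\text{KL}}\le\beta$ bounds $\|\Delta\theta\|^2\lesssim 2\beta/\lambda_{\min}(F)$. Alternatively, I can work directly at the distribution level. By Pinsker's inequality,
\[
\bigl|\mathbb{E}_{\pi_\theta}[R]-\mathbb{E}_{\pi_{\text{old}}}[R]-L_{\text{old}}\bigr|\le \|R\|_\infty\sqrt{2\beta}\cdot(\text{clipping slack}),
\]
and the clip at $1\pm\epsilon$ ensures the surrogate never rewards ratio moves beyond the region where it is accurate. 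Combining these yields a bound of the form
\[
J(\pi_\theta)-J(\pi_{\text{old}})\ \ge\ a\,\eta - b\,\eta^2 ,
\]
with $a\ge0$. It is nonnegative whenever $\eta\le a/b$, which the $\beta$-constraint enforces.

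The main obstacle is this third step. As literally stated, the theorem claims improvement for \emph{any} update satisfying $\mathbb{D}_{\text{KL}}\le\beta$, and that is false without further conditions: a KL-feasible step can still be too long along a curved objective. Theorem~\ref{thm:gradient_equivalence} is also only approximate. My approach will therefore be to make the implicit assumptions explicit. These are: the update is a (sufficiently small) gradient step on the Soft-Pair-GRPO objective; $\beta$ is small relative to the ratio $a/b$ above, or equivalently the KL term acts as a penalty large enough to make the surrogate a minorizer, as in the Schulman et al.\ bound $J(\pi)\ge L_{\text{old}}(\pi)-\frac{4\gamma\max|A|}{(1-\gamma)^2}\mathbb{D}_{\text{KL}}^{\max}$; and the approximation error in Theorem~\ref{thm:gradient_equivalence} is dominated by the first-order gain. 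Under these assumptions, the maximizer of the penalized surrogate does no worse than $\theta_{\text{old}}$, whose surrogate value is zero. This gives $J(\pi_\theta)\ge J(\pi_{\text{old}})$.
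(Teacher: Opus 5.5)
Your route is the paper's own: transfer GRPO's ascent property to Soft-Pair-GRPO through Theorem~\ref{thm:gradient_equivalence}, then invoke the PPO/TRPO trust-region bound. You are right that this cannot deliver the statement for an arbitrary KL-feasible update. The paper does not close that hole either: its supplementary bound $J(\pi_{\text{new}})\ge J(\pi_{\text{old}})-\frac{2\epsilon\gamma}{(1-\gamma)^2}\beta$ only caps the loss at $O(\beta)$, and ``the right-hand side is non-negative'' does not give $J(\pi_{\text{new}})\ge J(\pi_{\text{old}})$.

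Your repaired version still has two gaps. First, the $\beta$-constraint does not enforce $\eta\le a/b$. It bounds the displacement $\|\Delta\theta\|$ by a fixed radius, while the first-order gain is at most $\|\nabla J\|\,\|\Delta\theta\|$, so near a local maximum of $J$ a KL-feasible step can overshoot. Phrased as a condition on $\beta$, your requirement forces $\beta\to0$ as the gradient vanishes. What actually works is a step-size condition via the descent lemma, and there the trust region plays no role.

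Second, your closing minorization sentence conflates two surrogates. In your single-step setting, your first step already makes the true-advantage surrogate exact, so the Schulman penalty and the Pinsker estimate have nothing to control. But Soft-Pair-GRPO maximizes the $\pm1$ surrogate. Unclipped, and in expectation over pairs drawn i.i.d.\ from $\pi_{\text{old}}$ and labeled by $R$, it equals $\mathbb{E}_{a\sim\pi_\theta}[\tilde R(a)]$, where $\tilde R(a)=\Pr_{b\sim\pi_{\text{old}}}(R(b)<R(a))-\Pr_{b\sim\pi_{\text{old}}}(R(b)>R(a))$ is a rank transform of $R$. Its penalized maximizer need not increase $\mathbb{E}_{\pi_\theta}[R]$. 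Theorem~\ref{thm:gradient_equivalence} ties the two objectives only through their gradients at $\theta_{\text{old}}$, so only your small-step argument survives.

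That argument rests entirely on your assumption that the error in Theorem~\ref{thm:gradient_equivalence} is dominated, i.e.\ that $\langle\nabla_\theta\mathbb{E}_{\pi_\theta}[\tilde R],\nabla_\theta J\rangle>0$ at $\theta_{\text{old}}$. This genuinely fails. Take one prompt, three responses with $R=(0,1,100)$, a tabular softmax policy, and $\pi_{\text{old}}=(0.09,0.9,0.01)$. Then $J=1.9$ and $\tilde R=(-0.91,0.08,0.99)$, whose $\pi_{\text{old}}$-mean is zero. The logit gradients are $\pi_a\tilde R_a=(-0.0819,0.072,0.0099)$ for the $\pm1$ surrogate and $\pi_a(R_a-J)=(-0.171,-0.81,0.981)$ for $J$. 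Their inner product is about $-0.035$.

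So every sufficiently small exact-expectation Soft-Pair-GRPO step, which meets any KL budget, strictly decreases $J$. This also shows that the directional content of Theorem~\ref{thm:gradient_equivalence}, as you use it in your second step, can fail outright. No smallness condition on $\eta$ or $\beta$ rescues the statement. An honest version must assume the alignment directly; the conclusion is then just the descent lemma, and neither the trust region nor the clipping is what delivers it.
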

\noindent\textbf{Proof}: Follows directly from Theorem \ref{thm:gradient_equivalence} and the standard PPO/GRPO monotonic improvement framework. The KL penalty enforces the trust‑region constraint, limiting the policy update magnitude and preventing catastrophic degradation. $\qed$

\begin{theorem}[Deterministic Gradient Directionality]
\label{thm:soft_directional}
The gradient of the Soft‑Pair‑GRPO objective has directional determinism focused on the preference pair: it consistently pushes $\pi_\theta(a_p|s)$ upward and $\pi_\theta(a_r|s)$ downward.
\end{theorem}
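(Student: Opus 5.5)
We will prove Theorem~\ref{thm:soft_directional} by computing the gradient of the per-pair surrogate term directly, at the point where the update begins, $\theta = \theta_{\text{old}}$. We will not appeal to Theorem~\ref{thm:gradient_equivalence}. The Soft‑Pair‑GRPO objective evaluated on a single pair $(s,a_p,a_r)$ splits into two terms, one for $a_p$ with $r^{\text{Soft}}=+1$ and one for $a_r$ with $r^{\text{Soft}}=-1$. We analyse each term separately. At $\theta=\theta_{\text{old}}$ we have $\rho(s,a)=1$, which lies strictly inside the clip interval $[1-\epsilon,1+\epsilon]$. Hence the $\min(\cdot,\text{clip}(\cdot))$ selects the unclipped branch, and the gradient of each term is $r^{\text{Soft}}(a)\,\nabla_\theta\rho(s,a)=r^{\text{Soft}}(a)\,\nabla_\theta\log\pi_\theta(a|s)$. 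The KL term $\mathbb{D}_{\text{KL}}(\pi_\theta\parallel\pi_{\text{old}})$ attains its minimum, zero, at $\theta=\theta_{\text{old}}$, so its gradient vanishes there. At the start of each update the surrogate signal is therefore exactly
\[
g \;=\; \nabla_\theta\log\pi_\theta(a_p|s)\;-\;\nabla_\theta\log\pi_\theta(a_r|s).
\]

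The second step is to convert this gradient into a statement about probabilities. We take an infinitesimal ascent step $\theta'=\theta+\eta g$ on the preference surrogate (equivalently, a descent step on its negation; we will fix the sign convention so that it matches Algorithm~\ref{alg:soft_pair_grpo}). To first order,
\[
\log\pi_{\theta'}(a_p|s)-\log\pi_\theta(a_p|s)=\eta\,\langle \nabla\log\pi(a_p|s),\,g\rangle .
\]
Expanding the inner product gives $\eta\bigl(\|\nabla\log\pi(a_p)\|^2-\langle\nabla\log\pi(a_p),\nabla\log\pi(a_r)\rangle\bigr)$. The analogous change for $a_r$ is $\eta\bigl(\langle\nabla\log\pi(a_p),\nabla\log\pi(a_r)\rangle-\|\nabla\log\pi(a_r)\|^2\bigr)$. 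Subtracting the two, the change in the log-odds $\log\pi(a_p|s)-\log\pi(a_r|s)$ equals $\eta\|g\|^2\ge 0$. This holds unconditionally, and the inequality is strict whenever the two score vectors differ. This is the robust, fully rigorous content of ``directional determinism'': the relative preference always moves in the correct direction.

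The main obstacle is the separate claim that $\pi(a_p|s)$ increases and $\pi(a_r|s)$ decreases individually. From the expansion above, this requires $\|\nabla\log\pi(a_p)\|^2>\langle\nabla\log\pi(a_p),\nabla\log\pi(a_r)\rangle$ and $\|\nabla\log\pi(a_r)\|^2>\langle\nabla\log\pi(a_p),\nabla\log\pi(a_r)\rangle$. Both fail in general when the two score vectors are highly aligned, for example when the two responses share a long common prefix. We would therefore state the individual-direction claim under an explicit condition. One option is the tabular or softmax-logit parameterization. There the log-probability gradient with respect to the logit $z_a$ is $e_a-\pi$, and a direct calculation gives a change of $+(1-\pi_p+\pi_r)\cdot$ (positive factor) for $a_p$ and the mirror-image sign for $a_r$; both hold exactly. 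The other option is the general condition $\langle\nabla\log\pi(a_p),\nabla\log\pi(a_r)\rangle<\min\{\|\nabla\log\pi(a_p)\|^2,\|\nabla\log\pi(a_r)\|^2\}$.

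Finally, we will argue that the directions persist throughout the trust region, for any $\theta$ with $\rho\in[1-\epsilon,1+\epsilon]$. Inside that interval the signs of the per-term gradients are unchanged. Once clipping activates, the gradient of the corresponding term becomes zero, so it stops pushing but never reverses. The KL gradient has magnitude $O(\|\theta-\theta_{\text{old}}\|)$, so it cannot overturn the first-order signal for small steps.
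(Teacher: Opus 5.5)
Your core argument is correct, and it takes a different and more careful route than the paper. The paper's proof is a one-line sign argument: the $a_p$ term carries weight $+1$, so it raises $\rho(s,a_p)$, and the $a_r$ term carries weight $-1$, so it lowers $\rho(s,a_r)$. That shows only that each term, taken in isolation, moves its own response's probability the right way. It never examines the combined step, which is where the cross term $\langle\nabla\log\pi(a_p|s),\nabla\log\pi(a_r|s)\rangle$ that you isolate enters.

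The paper also says ``minimizing''. With the Soft-Pair-GRPO loss as written and the descent update of Algorithm~1, the log-odds would change by $-\eta\|g\|^2$. So your ascent convention cannot be made to ``match'' Algorithm~1 as written: it presupposes that the surrogate enters the loss with a minus sign, and you should state that explicitly.

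Your computation at $\theta_{\text{old}}$ gives three things the paper's does not:
\begin{itemize}
\item an unconditional first-order guarantee on the log-odds, $\eta\|g\|^2\ge 0$;
\item an exact criterion for the per-response claims;
\item the correct diagnosis that the theorem as literally stated fails for general parameterizations, and needs a hypothesis such as per-state softmax logits or your inner-product condition.
\end{itemize}
The paper's route gives only brevity. Its conclusion happens to hold in the softmax case, but only because the baseline terms $-\pi$ cancel in $g=e_{a_p}-e_{a_r}$, which a per-term argument never sees.

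A few smaller points.
\begin{itemize}
\item Your ``both fail'' is slightly off. When $g\neq 0$ the two conditions cannot fail together, because the two log-probability changes differ by $\eta\|g\|^2>0$; either one alone can fail.
\item Algorithm~1 takes a single gradient step and then syncs $\pi_{\text{old}}\leftarrow\pi_\theta$. Every gradient is therefore evaluated at $\rho=1$, so your first paragraph already covers every update and your last paragraph is unnecessary.
\item That last paragraph also overreaches. Away from $\theta_{\text{old}}$ the surrogate gradient is $\rho_p u-\rho_r v$, with $u=\nabla\log\pi(a_p|s)$ and $v=\nabla\log\pi(a_r|s)$. The first-order log-odds change is then proportional to $\rho_p\|u\|^2-(\rho_p+\rho_r)\langle u,v\rangle+\rho_r\|v\|^2$, whose coefficient matrix has determinant $-(\rho_p-\rho_r)^2/4$. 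This form is indefinite whenever $\rho_p\neq\rho_r$; take $v=cu$ with $c$ strictly between $1$ and $\rho_p/\rho_r$. Unchanged coefficient signs therefore do not preserve even the log-odds guarantee. Only continuity from a strict inequality at $\theta_{\text{old}}$ does, that is, small steps.
\item Under the paper's literal $\mathrm{clip}(\rho r,1-\epsilon,1+\epsilon)$, the $a_r$ term is never clipped, since $\min(-\rho,1-\epsilon)=-\rho$ for $\epsilon<1$. Your justification ``$\rho=1$ lies inside the clip interval'' and your clipping remarks refer to the standard form $\mathrm{clip}(\rho,1-\epsilon,1+\epsilon)\,r$. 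Your conclusion at $\theta_{\text{old}}$ holds under either reading.
\end{itemize}
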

\noindent\textbf{Proof}: The clipped surrogate objective for $a_p$ is weighted by $+1$, minimizing it increases $\rho(s,a_p)$. For $a_r$, it is weighted by $-1$, minimizing it decreases $\rho(s,a_r)$. The signal is binary and unambiguous for the pair. $\qed$

\begin{theorem}[Gradient‑Variance Reduction]
\label{thm:soft_variance}
Soft‑Pair‑GRPO achieves strictly lower stochastic gradient variance than standard GRPO:
\[
\mathbb{V}_{\text{Soft‑Pair‑GRPO}} < \mathbb{V}_{\text{GRPO}}.
\]
\end{theorem}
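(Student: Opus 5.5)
We compare the two estimators through their per-pair contributions, using the structure already exploited in the proof of Theorem~\ref{thm:gradient_equivalence}. For a single preference pair $(s,a_p,a_r)$, within the unclipped region both stochastic gradients have the form $g = w_p\,\nabla_\theta\log\pi_\theta(a_p|s) + w_r\,\nabla_\theta\log\pi_\theta(a_r|s)$. For Soft-Pair-GRPO the weights are fixed at $(w_p,w_r)=(+1,-1)$. For GRPO they are the random normalized rewards $(r_p^{\text{GRPO}}, r_r^{\text{GRPO}})$. We first write $\mathbb{V} = \mathbb{E}\|g\|^2 - \|\mathbb{E} g\|^2$ for each method. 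We condition on the pair $(s,a_p,a_r)$ and apply the law of total variance. This splits each variance into two parts: the variance induced by sampling the pair, and the additional variance induced by the randomness of the reward weights given the pair.

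The key observation is that the Soft weights are deterministic given the pair, so the conditional-variance term vanishes. For GRPO, that term equals $\mathbb{E}\big[\operatorname{Var}(r_p^{\text{GRPO}},r_r^{\text{GRPO}}\mid s,a_p,a_r)\,\|\nabla\log\pi\|^2\big]$. It is nonnegative, and it is strictly positive whenever $\mu_R,\sigma_R$ fluctuate with the rest of the sampled group. The fluctuation comes from reward-model noise or from the other $K-2$ group members. Because of this term, GRPO's variance exceeds its pair-sampling variance by a strictly positive amount.

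Comparing the pair-sampling terms needs a common scale, since Theorem~\ref{thm:gradient_equivalence} shows the two gradients differ by the factor $C(\pi_{\text{old}})$. We therefore compare normalized variances, rescaling the Soft gradient by $C(\pi_{\text{old}})$ so that the means match to first order. After this rescaling the Soft weight difference is the constant $2C = \mathbb{E}[\Delta R/\sigma_R]$. The GRPO weight difference is the random quantity $\Delta R/\sigma_R$. We then apply Jensen's inequality, or equivalently $\mathbb{E}[X^2]\ge(\mathbb{E}X)^2$, to $X=\Delta R/\sigma_R$, conditioned on the score-function terms. This shows the rescaled Soft second moment is no larger than GRPO's, with equality only if $\Delta R/\sigma_R$ is almost surely constant.

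The main obstacle is this last step. The weights and the score vectors $\nabla\log\pi$ are correlated, so Jensen's inequality cannot be applied factor by factor. I would first try to assume, or argue to first order as in Theorem~\ref{thm:gradient_equivalence}, that the pair-level reward gap is approximately independent of the score directions given $s$. Under that assumption the cross term factorizes and the inequality goes through. Strictness then requires a nondegeneracy hypothesis: the distribution of $\Delta R/\sigma_R$ must not be a point mass. I would state this explicitly as a condition of the theorem.
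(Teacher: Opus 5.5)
\textbf{The proposal has two genuine gaps.} First, your route differs from the paper's. The paper writes $\operatorname{Var}(g_p-g_r)=\operatorname{Var}(g_p)+\operatorname{Var}(g_r)-2\operatorname{Cov}(g_p,g_r)$ and asserts that the $\pm1$ weights make the covariance negative. With that minus sign, a negative covariance would \emph{increase} the variance, and the argument never puts the two methods on a common scale. You are right that a scale normalization and a nondegeneracy hypothesis are unavoidable. For $K=2$ with labels agreeing with $R$, the GRPO weights of the pair are exactly $\pm1$ (population standard deviation, so the variances are equal) or $\pm1/\sqrt{2}$ (sample standard deviation, so GRPO's variance is halved). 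The statement as written therefore fails.

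The first gap is your reduction of the GRPO pair gradient to its weight difference. Put $d=\nabla\log\pi_\theta(a_p|s)-\nabla\log\pi_\theta(a_r|s)$, $u=\nabla\log\pi_\theta(a_p|s)+\nabla\log\pi_\theta(a_r|s)$, $A=(r_p-r_r)/2=\Delta R/(2\sigma_R)$ and $B=(r_p+r_r)/2$. Then $g_{\text{GRPO}}=A\,d+B\,u$. For $K\ge3$, $B$ is generically nonzero. For $K=2$, the ratio $\Delta R/\sigma_R$ is exactly the point mass your hypothesis excludes, so the common-mode term cannot be avoided. Jensen on $\Delta R/\sigma_R$ controls only $A\,d$, and $B\,u$ can reverse the inequality. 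For example, take:
\begin{itemize}
\item $r_r\equiv0$ (the rejected response sits at the group mean);
\item $r_p=2+\eta$, with small centred noise $\eta$ independent of everything;
\item $\nabla\log\pi_\theta(a_p|s)\equiv0$;
\item $\nabla\log\pi_\theta(a_r|s)$ centred with unit variance.
\end{itemize}
All your hypotheses hold and $C=\mathbb{E}A=1$. Yet $g_{\text{GRPO}}\equiv0$ has variance $0$, while $C\,d$ has variance $1$. Your reduced form $\tfrac{2+\eta}{2}\,d$ would instead report a variance above $1$.

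The second gap is the conditioning. Suppose $X=\Delta R/\sigma_R$ is independent of the scores only \emph{given $s$}. Then rescaling by the global constant $C(\pi_{\text{old}})$ does not match the means, since $\mathbb{E}[Xd]=\mathbb{E}_s\bigl[\mathbb{E}[X|s]\,\mathbb{E}[d|s]\bigr]\neq\mathbb{E}X\,\mathbb{E}d$ in general. It does not order the second moments either. This fails even with $B\equiv0$. Take two equally likely prompts: on the first, $X\equiv1/2$ and $d$ is centred with $\mathbb{E}\|d\|^2=1$; on the second, $X\equiv3/2$ and $d\equiv0$. Then $\operatorname{Var}(\tfrac{X}{2}d)=1/32<1/8=\operatorname{Var}(\tfrac{\mathbb{E}X}{2}d)$, although $X$ is conditionally independent of $d$ and not a point mass.

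You also cannot borrow independence from the first-order argument of the gradient-equivalence theorem. That argument treats $\Delta R/\sigma_R$ as constant, which is precisely the point mass your strictness condition rules out. Independence therefore has to be an explicit hypothesis.

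One hypothesis that closes both gaps is unconditional mean-independence: $\mathbb{E}[A\mid d,u]=C:=\mathbb{E}A$ and $\mathbb{E}[B\mid d,u]=0$. The second condition requires $\mathbb{E}B=0$, which holds e.g.\ when the compared pair is a uniformly random pair of group members, because normalized rewards sum to zero. Under this hypothesis, $g_{\text{GRPO}}-C\,d=(A-C)\,d+B\,u$ is centred and uncorrelated with $C\,d$, so
\[
\operatorname{Var}(g_{\text{GRPO}})=\operatorname{Var}(C\,d)+\mathbb{E}\bigl\|(A-C)\,d+B\,u\bigr\|^2,
\]
with strict inequality iff $(A-C)\,d+B\,u$ is not almost surely zero. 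This identity already includes the within-pair fluctuation of $\mu_R,\sigma_R$, so your law-of-total-variance step is not needed.
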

\noindent\textbf{Proof}: Standard GRPO gradients rely on independent scalar reward samples with no guaranteed correlation. Soft‑Pair‑GRPO gradients are derived from pairwise comparisons, introducing negative covariance between the gradient contributions of $a_p$ and $a_r$, which cancels stochastic noise and reduces overall variance. $\qed$

\subsection{Hard‑Pair‑GRPO: Explicit Pairwise Preference Constraints}
Soft‑Pair‑GRPO relies on implicit reward weighting and still allows global probability shifts across the entire action space. To further isolate the optimization signal and eliminate uncontrolled drift, we propose Hard‑Pair‑GRPO, which constructs an explicit target distribution and formulates optimization as a constrained KL‑fitting problem.

\subsubsection{Target Policy Distribution Construction}
For each preference pair $(s,a_p,a_r)$, construct a well‑normalized target distribution $\pi_{\text{tar}}(\cdot|s)$ that transfers probability mass \textit{only} between $a_p$ and $a_r$, freezing all other responses:
\[
\begin{cases}
\pi_{\text{tar}}(a_p|s) = \pi_{\text{old}}(a_p|s) + \delta_t,\\
\pi_{\text{tar}}(a_r|s) = \pi_{\text{old}}(a_r|s) - \delta_t,\\
\pi_{\text{tar}}(c|s) = \pi_{\text{old}}(c|s), \quad \forall c \notin \{a_p,a_r\},
\end{cases}
\]
with dynamically decaying step size:
\[
\delta_t = \delta_0 \cdot \gamma^t,\quad \delta_0>0,\; \gamma\in(0,1).
\]
The target distribution is strictly normalized, and probabilities are clamped to $[0,1]$ to ensure validity.

\subsubsection{Constrained Optimization Formulation}
Policy learning is formulated as a constrained KL‑fitting problem:
\[
\min_\theta \quad \mathcal{L}_{\text{fit}}(\theta) = \mathbb{D}_{\text{KL}}(\pi_\theta \parallel \pi_{\text{tar}})
\quad \text{s.t.} \quad \mathbb{D}_{\text{KL}}(\pi_\theta \parallel \pi_{\text{old}}) \leq \beta.
\]
Convert to an unconstrained objective with a hinge‑style penalty:
\[
\mathcal{L}_{\text{total}}(\theta) = \mathcal{L}_{\text{fit}}(\theta) + \alpha \cdot \max\big(\mathbb{D}_{\text{KL}}(\pi_\theta \parallel \pi_{\text{old}})-\beta,\; 0\big).
\]

\begin{algorithm}[t]
\caption{Hard‑Pair‑GRPO}
\label{alg:hard_pair_grpo}
\begin{algorithmic}[1]
\State \textbf{Initialize}: Policy $\pi_\theta$, reference $\pi_{\text{old}}$, hyperparameters $\delta_0,\gamma,\beta,\alpha,\eta$
\For{training epoch $t=1,\dots,T$}
    \State Collect preference pairs $\{(s_i,a_{p,i},a_{r,i})\}$ from $\pi_{\text{old}}$
    \State Compute decaying step size: $\delta_t \leftarrow \delta_0 \cdot \gamma^t$
    \For{each pair $(s,a_p,a_r)$}
        \State Construct target distribution $\pi_{\text{tar}}(\cdot|s)$ via local probability shift
        \State Compute fitting loss: $\mathcal{L}_{\text{fit}} \leftarrow \mathbb{D}_{\text{KL}}(\pi_\theta \parallel \pi_{\text{tar}})$
        \State Compute trust‑region KL: $d_{\text{kl}} \leftarrow \mathbb{D}_{\text{KL}}(\pi_\theta \parallel \pi_{\text{old}})$
        \State Compute total loss: $\mathcal{L}_{\text{total}} \leftarrow \mathcal{L}_{\text{fit}} + \alpha\cdot\max(d_{\text{kl}}-\beta,0)$
        \State Update policy: $\theta \leftarrow \theta - \eta \nabla_\theta \mathcal{L}_{\text{total}}$
    \EndFor
    \State Sync reference policy: $\pi_{\text{old}} \leftarrow \pi_\theta$
\EndFor
\State \textbf{Return}: Optimized policy $\pi_\theta$
\end{algorithmic}
\end{algorithm}

\subsubsection{Enhanced Theoretical Guarantees for Hard‑Pair‑GRPO}
Hard‑Pair‑GRPO inherits all guarantees of Soft‑Pair‑GRPO and adds stronger convergence and variance‑reduction properties.

\begin{proposition}[Monotonic Policy Improvement]
Under the trust‑region constraint $\mathbb{D}_{\text{KL}}(\pi_\theta \parallel \pi_{\text{old}}) \leq \beta$, each Hard‑Pair‑GRPO update guarantees monotonic non‑decrease of the policy’s expected return.
\end{proposition}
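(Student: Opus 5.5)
We would prove the proposition by reducing it to the same trust-region machinery behind Theorem~\ref{thm:soft_monotonic}, with the pair-local target $\pi_{\text{tar}}$ playing the role of the binary reward. The first step is to show that the fitting objective $\mathbb{D}_{\text{KL}}(\pi_\theta \parallel \pi_{\text{tar}})$ defines an ascent direction for a surrogate advantage. Write $A(a) := \log\frac{\pi_{\text{tar}}(a|s)}{\pi_{\text{old}}(a|s)}$. By construction, $A(c)=0$ for every $c\notin\{a_p,a_r\}$. For the preference pair, $A(a_p)=\log(1+\delta_t/\pi_{\text{old}}(a_p|s))>0$ and $A(a_r)=\log(1-\delta_t/\pi_{\text{old}}(a_r|s))<0$, with the clamping ensuring both are finite; if necessary we assume $\delta_t<\pi_{\text{old}}(a_r|s)$. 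Expanding gives
\[
\mathbb{D}_{\text{KL}}(\pi_\theta\parallel\pi_{\text{tar}})=\mathbb{D}_{\text{KL}}(\pi_\theta\parallel\pi_{\text{old}})-\mathbb{E}_{a\sim\pi_\theta}[A(a)].
\]
Minimizing $\mathcal{L}_{\text{total}}$ is therefore the same as maximizing the surrogate $\mathbb{E}_{\pi_\theta}[A]$ under a KL penalty, where the hinge term with $\alpha$ large enforces $\mathbb{D}_{\text{KL}}(\pi_\theta\parallel\pi_{\text{old}})\le\beta$.

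The second step links $A$ to the true advantage. Since $A(a_p)>0>A(a_r)$ and $A$ vanishes elsewhere, $A$ is a positively scaled version of $r^{\text{Soft}}$ restricted to the pair. Near $\pi_{\text{old}}$, that is to first order in $\delta_t$, we have $A(a_p)\approx \delta_t/\pi_{\text{old}}(a_p|s)$ and $A(a_r)\approx-\delta_t/\pi_{\text{old}}(a_r|s)$. The gradient of $\mathbb{E}_{\pi_\theta}[A]$ at $\theta_{\text{old}}$ is then a positive combination of $\nabla\log\pi(a_p|s)$ and $-\nabla\log\pi(a_r|s)$. This is the Soft-Pair-GRPO direction, up to per-pair positive weights. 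By Theorem~\ref{thm:gradient_equivalence}, it is in turn a positive multiple of the GRPO gradient, and hence it has nonnegative inner product with $\nabla J$ at $\pi_{\text{old}}$.

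The third step applies the standard trust-region lower bound (Kakade--Langford / TRPO):
\[
J(\pi_\theta)\ge J(\pi_{\text{old}})+\mathbb{E}_{\pi_{\text{old}}}[\rho\,\hat A]-C_{\max}\max_s\mathbb{D}_{\text{KL}}(\pi_\theta\parallel\pi_{\text{old}}),
\]
where $\hat A$ is the true advantage. Three facts combine here. The update does not decrease $\mathcal{L}_{\text{total}}$ relative to $\theta=\theta_{\text{old}}$, at which both KL terms equal their reference values and $\mathbb{E}_{\pi_\theta}[A]=\mathbb{E}_{\pi_{\text{old}}}[A]$. The first-order surrogate gain is positive by step two. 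The penalty is $O(\beta)$, and the decaying $\delta_t$ keeps the step inside the region where the linearization dominates.

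I expect the main obstacle to be this last step. The KL-fitting surrogate matches the true advantage only in sign on the pair and only to first order, so strict monotonicity requires an assumption. One option is that $\beta$ and $\delta_t$ are small enough that the second-order remainder is dominated by the first-order gain. Otherwise the statement should be read as holding up to $O(\beta)$, exactly as in Theorem~\ref{thm:soft_monotonic}. I would state this smallness condition explicitly and then invoke the PPO/GRPO improvement framework.
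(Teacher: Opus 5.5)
\paragraph{Where the argument breaks.} Your route is the paper's route made explicit, and it fails at the same point. The paper's proof only invokes the structure of Theorem~\ref{thm:soft_monotonic} and asserts that boosting $a_p$ and suppressing $a_r$ ``increases immediate rewards''. Your version of that step is the end of step two, ``hence it has nonnegative inner product with $\nabla J$'', which step three then uses as ``the first-order surrogate gain is positive''. Nothing supports it.

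\begin{itemize}
\item The sign pattern $A(a_p)>0>A(a_r)$ is a property of the label, not of the true advantage $\hat A$.
\item Even when the label agrees with the reward, a parametric update moves probability on responses $c\notin\{a_p,a_r\}$, through the softmax normalizer or shared parameters. $A$ ignores this because $A(c)=0$, but $\hat A$ does not.
\end{itemize}

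\paragraph{Counterexample.} Take one state, tabular softmax logits $\theta$, and coordinate vectors $e_a$. Let the actions be $(a_p,a_r,c)$ with $\pi_{\text{old}}=(0.8,0.1,0.1)$ and $R=(1,0,10)$, so $R(a_p)>R(a_r)$ and $J=1.8$.
\begin{itemize}
\item Since $\partial J/\partial\theta_a=\pi(a)(R(a)-J)$, we get $\nabla J=(-0.64,-0.18,0.82)$.
\item At $\theta_{\text{old}}$ the hinge is inactive, so $-\nabla\mathcal{L}_{\text{total}}=\pi_{\text{old}}(a_p)A(a_p)(e_{a_p}-\pi_{\text{old}})+\pi_{\text{old}}(a_r)A(a_r)(e_{a_r}-\pi_{\text{old}})$.
\item Its inner product with $\nabla J$ is $-0.192\,\pi_{\text{old}}(a_p)A(a_p)+0.268\,\pi_{\text{old}}(a_r)A(a_r)$. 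This is negative for every $\delta_t\in(0,0.1)$, and equals $-0.46\,\delta_t$ to first order.
\end{itemize}
So every sufficiently small Hard-Pair-GRPO step, automatically inside any trust region, strictly decreases $J$: raising the $a_p$ logit relative to $a_r$ drains mass from the high-reward $c$.

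To first order this direction is $\delta_t$ times the Soft-Pair-GRPO direction $e_{a_p}-e_{a_r}$, so Theorem~\ref{thm:gradient_equivalence} cannot supply the missing alignment. Averaging over pairs does not help either. With $\pi_{\text{old}}=(0.7,0.29,0.01)$ and $R=(1,0,1000)$, the expected first-order direction is proportional to $\pi(a)\sum_b\pi(b)\,\mathrm{sign}(R(a)-R(b))$. Its inner product with $\nabla J$ is also negative, so that theorem's conclusion itself fails here.

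\paragraph{Consequences for your last step.} The proposition is false as stated, and your diagnosis of the obstacle is off. It is not a second-order remainder that small $\beta$ and $\delta_t$ would dominate. It is the sign of the first-order term, which no smallness condition controls.

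The fallback ``up to $O(\beta)$'' does not follow either. Once the surrogate gain can be negative, Pinsker gives only $|\mathbb{E}_{\pi_\theta}[\hat A]-\mathbb{E}_{\pi_{\text{old}}}[\hat A]|\le\sqrt{2\beta}\,\|\hat A\|_\infty$. That yields $J(\pi_\theta)\ge J(\pi_{\text{old}})-O(\sqrt{\beta})$, which holds for \emph{any} update inside the trust region and says nothing about Hard-Pair-GRPO.

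\paragraph{What a true statement needs.} You need structural hypotheses:
\begin{itemize}
\item labels consistent with the old action values, $Q^{\pi_{\text{old}}}(s,a_p)\ge Q^{\pi_{\text{old}}}(s,a_r)$;
\item an update that realizes the pair-local target exactly, so no other response's probability moves.
\end{itemize}
Then the performance-difference lemma makes $J(\pi_{\text{tar}})-J(\pi_{\text{old}})$ a positive multiple of $\mathbb{E}_{s\sim d^{\pi_{\text{tar}}}}\bigl[\delta_t\bigl(Q^{\pi_{\text{old}}}(s,a_p)-Q^{\pi_{\text{old}}}(s,a_r)\bigr)\bigr]\ge0$, with no trust-region argument needed.

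Exact realization in turn requires that the trust region not bind. Otherwise the exact constrained minimizer is $\propto\pi_{\text{old}}^{1-\lambda}\pi_{\text{tar}}^{\lambda}$ for some $\lambda\in(0,1)$, whose normalizer is $<1$ by H\"older. Then every supposedly frozen $c$ gains mass, and a very low-reward $c$ again makes $J$ drop.

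Finally, clamping does not keep $A(a_r)$ finite: clamping at $0$ gives $A(a_r)=-\infty$. So the assumption $\delta_t<\pi_{\text{old}}(a_r|s)$ is genuinely required.
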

\noindent\textbf{Proof}: Identical structure to Theorem \ref{thm:soft_monotonic}. Minimizing the KL‑fitting loss directly boosts the probability of $a_p$ and suppresses $a_r$, increasing immediate rewards, while the trust‑region constraint prevents destructive updates. $\qed$

\begin{theorem}[Deterministic Gradient Directionality]
The gradient $\nabla_\theta \mathcal{L}_{\text{fit}}(\theta)$ has deterministic, preference‑focused directions, suppressing probability changes for all irrelevant responses $c\notin\{a_p,a_r\}$.
\end{theorem}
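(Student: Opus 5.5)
Compute the gradient of $\mathcal{L}_{\text{fit}}(\theta)=\mathbb{D}_{\text{KL}}(\pi_\theta\parallel\pi_{\text{tar}})$ explicitly in a softmax parameterization and read off its action-wise structure. The natural setting is a tabular (or last-layer) softmax policy for a fixed prompt $s$, with logits $z_a$ and $\pi_\theta(a|s)=e^{z_a}/\sum_b e^{z_b}$. Here the gradient with respect to the logits has the closed form
\[
\frac{\partial \mathcal{L}_{\text{fit}}}{\partial z_a} = \pi_\theta(a|s)\bigl(\ell_a - \bar\ell\bigr),\qquad \ell_a=\log\frac{\pi_\theta(a|s)}{\pi_{\text{tar}}(a|s)},\quad \bar\ell=\mathbb{E}_{b\sim\pi_\theta}[\ell_b].
\]
This formula is the main workhorse. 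Everything else reduces to evaluating it at the point where the update starts, namely $\theta=\theta_{\text{old}}$, since the reference policy is synced at every epoch.

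At $\pi_\theta=\pi_{\text{old}}$ the construction of $\pi_{\text{tar}}$ gives the following log-ratios:
\begin{itemize}
\item $\ell_c=0$ for every $c\notin\{a_p,a_r\}$;
\item $\ell_{a_p}=\log\frac{\pi_{\text{old}}(a_p)}{\pi_{\text{old}}(a_p)+\delta_t}<0$;
\item $\ell_{a_r}=\log\frac{\pi_{\text{old}}(a_r)}{\pi_{\text{old}}(a_r)-\delta_t}>0$, assuming $\delta_t<\pi_{\text{old}}(a_r)$, which is what the clamping enforces.
\end{itemize}
A gradient-descent step therefore raises $z_{a_p}$ and lowers $z_{a_r}$, provided the sign of $\ell_a-\bar\ell$ matches the sign of $\ell_a$.

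For the irrelevant responses, the logit gradient is $-\pi_{\text{old}}(c)\bar\ell$, which is the same multiple of $\pi_{\text{old}}(c)$ for every $c$. I will show that $\bar\ell$ is second order in $\delta_t$. Writing $p=\pi_{\text{old}}(a_p)$ and $q=\pi_{\text{old}}(a_r)$, a Taylor expansion gives $\bar\ell=-p\log(1+\delta_t/p)-q\log(1-\delta_t/q)=O(\delta_t^2)$, because the first-order terms $-\delta_t+\delta_t$ cancel. This cancellation is exactly the normalization of the target distribution.

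Consequently the first-order update $\dot z_c$ vanishes for all $c\notin\{a_p,a_r\}$. Moreover, any residual $O(\delta_t^2)$ component shifts all irrelevant logits by a common amount proportional to their probability. Such a shift preserves the relative ratios $\pi_\theta(c)/\pi_\theta(c')$ among irrelevant responses. This is the precise sense of ``suppressing probability changes for irrelevant responses,'' and I will state the theorem in that form: exact preservation of relative ratios, with absolute changes $O(\delta_t^2)$. The pair entries, by contrast, have gradients of order $\delta_t$ with definite signs. The decaying schedule $\delta_t=\delta_0\gamma^t$ makes the second-order leakage vanish geometrically.

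The main obstacle is extending from logits to general parameters $\theta$, since $\nabla_\theta\mathcal{L}_{\text{fit}}=\sum_a \frac{\partial\mathcal{L}}{\partial z_a}\nabla_\theta z_a$. Shared parameters can still move other actions through the Jacobian, so a literal statement for arbitrary networks is false. My plan is to state the result in the logit/function space, i.e.\ the direction of the natural or functional gradient. I will then note that under the trust-region hinge, which is inactive near $\theta_{\text{old}}$ because $\mathbb{D}_{\text{KL}}(\pi_{\text{old}}\parallel\pi_{\text{old}})=0<\beta$, the parameter-space step is the pullback of this function-space direction. Cross-action leakage is then controlled only by the neural tangent kernel off-diagonal terms, and I would make this an explicit assumption rather than claim it.
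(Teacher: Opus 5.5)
Your computation of $\ell_c=0$ at $\theta_{\text{old}}$ is all the paper itself uses. It expands the gradient, notes that the irrelevant terms drop out because $\pi_{\text{tar}}(c|s)=\pi_{\text{old}}(c|s)$, and declares their gradients ``theoretically zero.'' You are right that this is not literally true in logit coordinates. But the sharper statement you put in its place, ``exact preservation of relative ratios, with absolute changes $O(\delta_t^2)$,'' does not follow from the Euclidean logit gradient you computed, and is false for it. There are two failures:
\begin{enumerate}
\item The residual step $dz_c=\eta\,\pi_{\text{old}}(c|s)\,\bar\ell$ is not a common shift; ``a common amount proportional to their probability'' is self-contradictory. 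It multiplies $\pi(c)/\pi(c')$ by $\exp\bigl(\eta\bar\ell\,(\pi_{\text{old}}(c|s)-\pi_{\text{old}}(c'|s))\bigr)$, so ratios are preserved only up to $O(\eta\delta_t^2)$.
\item More seriously, $dz_c=O(\delta_t^2)$ does not give $d\pi_\theta(c|s)=O(\delta_t^2)$, because $\log\pi_\theta(c|s)=z_c-\log Z$ and the normalizer moves at first order. Since $dz_{a_p}\approx\eta\delta_t$ and $dz_{a_r}\approx-\eta\delta_t$, you get $d\log Z=\sum_b\pi_{\text{old}}(b|s)\,dz_b=\eta(p-q)\delta_t+O(\eta\delta_t^2)$. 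Hence $d\log\pi_\theta(c|s)=-\eta(p-q)\delta_t+O(\eta\delta_t^2)$ for every irrelevant $c$. When $p\neq q$, the mass gained by $a_p$ (to leading order $\eta p(1-p+q)\delta_t$) differs from the mass lost by $a_r$ ($\eta q(1+p-q)\delta_t$). The irrelevant responses absorb the difference at first order; for $p=0.5$, $q=0.1$ they lose $0.16\,\eta\delta_t$.
\end{enumerate}

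The fix is to commit to the geometry you invoke only in your last paragraph. Under the natural (Fisher) gradient of the softmax, the logit step is $-\eta(\ell_a-\bar\ell)$. This is exactly the mirror-descent step $\pi_{\text{new}}\propto\pi_{\text{old}}^{1-\eta}\pi_{\text{tar}}^{\eta}$. Every irrelevant logit now moves by the same $\eta\bar\ell$, and $\sum_b\pi_{\text{old}}(b|s)\,dz_b=0$. All irrelevant probabilities are multiplied by a common factor $1/Z$, with $Z=\sum_a\pi_{\text{old}}(a|s)^{1-\eta}\pi_{\text{tar}}(a|s)^{\eta}=1+O(\delta_t^2)$, which is precisely your claimed form. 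Equivalently, the constrained problem solved over the simplex has solution $\pi\propto\pi_{\text{tar}}^{1/(1+\lambda)}\pi_{\text{old}}^{\lambda/(1+\lambda)}$, which leaves irrelevant ratios exactly intact. If you keep the Euclidean logit gradient instead, you must weaken the conclusion: irrelevant logits move by $O(\eta\delta_t^2)$, ratios are preserved up to $O(\eta\delta_t^2)$, and irrelevant probabilities undergo a common first-order rescaling by $1-\eta(p-q)\delta_t$. Two smaller points also need attention:
\begin{enumerate}
\item Your sign proviso can be discharged exactly. Since $\bar\ell=\mathbb{D}_{\text{KL}}(\pi_{\text{old}}\parallel\pi_{\text{tar}})\ge0$, you get $\ell_{a_p}-\bar\ell<0$, and $\ell_{a_r}-\bar\ell=(1-q)\ell_{a_r}-p\,\ell_{a_p}>0$.
\item Clamping does not enforce $\delta_t<q$: a clamped $\pi_{\text{tar}}(a_r|s)=0$ makes $\mathcal{L}_{\text{fit}}$ infinite under a softmax policy. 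So $\delta_t<q$ must be stated as an explicit assumption.
\end{enumerate}
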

\noindent\textbf{Proof}: Expanding the KL‑fitting gradient shows the signal is isolated to the preference pair; gradients for irrelevant responses are theoretically zero under the target‑distribution construction. $\qed$

\begin{corollary}[Convergence of Dynamic Step Size]
The exponentially decaying step size $\delta_t \to 0$ as $t\to\infty$ guarantees stable convergence to a local optimum without oscillation or gradient vanishing.
\end{corollary}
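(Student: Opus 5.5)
The plan is to read ``local optimum'' in the sense the Hard‑Pair‑GRPO construction makes available: a point at which the limiting per‑epoch objective $\mathcal{L}_{\text{total}}$ is locally minimized. I will then show that the geometric schedule drives the epoch iterates $\pi_t$ (the synced $\pi_{\text{old}}$ after epoch $t$) to such a point. The argument has three parts: \emph{no oscillation} (the iterates form a Cauchy sequence), \emph{local optimality} (the limit minimizes the limiting objective), and \emph{no gradient vanishing} (the update signal is governed by $\delta_t$, not by collapsing log‑probability gradients).

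\textbf{Step 1: the iterates form a Cauchy sequence.} First fix $\delta_0$ small enough that the target shift fits in the trust region. Concretely, choose $\delta_0$ so that $\mathbb{D}_{\text{KL}}(\pi_{\text{tar}}\parallel\pi_{\text{old}})\le\beta$ for every pair, using the clamp $\pi_{\text{old}}(a_r|s)-\delta_t\ge0$. Then the hinge term is inactive at the fit optimum, and each epoch (assuming the inner minimization of $\mathcal{L}_{\text{fit}}$ is solved up to a summable error $e_t$) moves the policy only on the pair coordinates. The per‑epoch movement therefore satisfies $\|\pi_{t+1}(\cdot|s)-\pi_t(\cdot|s)\|_1\le 2\delta_t+e_t$. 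Since
\[
\sum_t \delta_t=\frac{\delta_0\gamma}{1-\gamma}<\infty,
\]
the sequence $\pi_t(\cdot|s)$ is Cauchy in $\ell_1$ for every $s$. It thus converges to some $\pi_\infty$, and its total variation is finite. This is exactly the ``no oscillation'' claim: the path has bounded length and cannot cycle. The directionality result for Hard‑Pair‑GRPO (the preceding theorem) guarantees that mass on $c\notin\{a_p,a_r\}$ is untouched, so no other coordinate can contribute drift.

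\textbf{Step 2: the limit is a local optimum.} Next, pass to the limit in the epoch objective. Because $\delta_t\to0$, we have $\pi_{\text{tar},t}\to\pi_\infty$ and $\pi_{\text{old},t}\to\pi_\infty$. The limiting objective is
\[
\mathbb{D}_{\text{KL}}(\pi_\theta\parallel\pi_\infty)+\alpha\max\big(\mathbb{D}_{\text{KL}}(\pi_\theta\parallel\pi_\infty)-\beta,0\big).
\]
This is nonnegative and vanishes at $\pi_\theta=\pi_\infty$, so $\pi_\infty$ is a (global, hence local) minimizer. By continuity of KL on the clamped simplex, $\nabla_\theta\mathcal{L}_{\text{total}}$ evaluated at $\pi_\infty$ tends to zero. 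Hence $\pi_\infty$ is a fixed point of the update map, i.e.\ a stationary point at which the algorithm stably comes to rest.

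To connect this with the preference objective, I would add one more observation. Along the way each epoch increased $\pi(a_p|s)$ and decreased $\pi(a_r|s)$, as in the Hard‑Pair‑GRPO monotonic‑improvement proposition. A limit of a monotone improving sequence that admits no further trust‑region‑feasible improving shift is locally optimal for $J$ within the trust region.

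\textbf{Step 3: no gradient vanishing.} Finally, I will check that gradients do not vanish prematurely. The fit gradient is
\[
\nabla_\theta\mathcal{L}_{\text{fit}}=\mathbb{E}_{\pi_\theta}\Big[\nabla\log\pi_\theta\,\Big(\log\tfrac{\pi_\theta}{\pi_{\text{tar}}}+1\Big)\Big].
\]
At $\pi_\theta=\pi_{\text{old}}$ its magnitude is of order $\delta_t/\pi_{\text{old}}(a_{p/r}|s)$. It is therefore nonzero whenever $\delta_t>0$ and finite thanks to the clamping, so the signal shrinks only at the prescribed geometric rate. It does not collapse through saturation of $\log\pi$.

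The main obstacle is Step 2's bridge from ``stationary point of the limiting KL objective'' to ``local optimum of the return.'' I would handle it first by the monotonicity‑plus‑fixed‑point argument above. Only if that fails would I state local optimality with respect to $\mathcal{L}_{\text{total}}$ itself.
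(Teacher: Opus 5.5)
\textbf{Verdict: genuine gap.} Your Step 1 is sound under its stated assumptions: the inner fits are exact up to a summable error, and the targets satisfy $\pi_t(a_r|s)>\delta_t$ strictly. It also proves something the paper's proof does not. The paper only observes that $\pi_{\text{tar}}\to\pi_{\text{old}}$, so the fitting loss vanishes and gradients ``stabilize''; that is exactly your Step 2.

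The gap is the ``local optimum'' conclusion, and neither your route nor the paper's closes it. The limiting objective $\mathbb{D}_{\text{KL}}(\pi_\theta\parallel\pi_\infty)+\alpha\max\bigl(\mathbb{D}_{\text{KL}}(\pi_\theta\parallel\pi_\infty)-\beta,0\bigr)$ is anchored at $\pi_\infty$ itself, and every policy globally minimizes KL to itself. So Step 2, and your fallback of stating optimality for $\mathcal{L}_{\text{total}}$, would hold verbatim for the untrained initial policy; it certifies nothing. The bridge to $J$ assumes what must be proved. Nothing shows the limit ``admits no further trust-region-feasible improving shift''; the iterates stop because $\delta_t\to0$, not because improving shifts run out.

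In fact the summability $\sum_{t\ge1}\delta_t=\delta_0\gamma/(1-\gamma)$ that powers Step 1 confines $\pi_\infty(\cdot|s)$ to an $\ell_1$-ball of radius about $2\delta_0\gamma/(1-\gamma)$ around the initialization, wherever the optimum lies. Concretely, take one state, two actions, the same pair $(a_p,a_r)$ every epoch, $\pi_0(a_p|s)=\pi_0(a_r|s)=1/2$, $\delta_0=0.01$, $\gamma=1/2$, $\beta=0.01$. The hinge is never active, and exact fitting gives $\pi_\infty(a_p|s)=0.51$. Yet a further shift of size $\delta_0$ toward $a_p$ is trust-region feasible and strictly increases any $J$ with $R(a_p)>R(a_r)$.

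So for a geometric schedule the optimality claim is false in general. Reaching stationary points would need a non-summable schedule ($\sum_t\delta_t=\infty$, plus $\sum_t\delta_t^2<\infty$ for sampled pairs) and a fixed potential for which the pair shifts are descent directions. That is incompatible with exponential decay.

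Step 3 has the same tension. Your own first-order computation gives $\nabla_\theta\mathcal{L}_{\text{fit}}\approx\delta_t\bigl(\nabla_\theta\log\pi_{\text{old}}(a_r|s)-\nabla_\theta\log\pi_{\text{old}}(a_p|s)\bigr)$ at $\pi_\theta=\pi_{\text{old}}$. So the gradient does vanish, geometrically. Reading ``no gradient vanishing'' as ``nonzero for each finite $t$'' replaces the claim with a nearly vacuous one, and this very vanishing is what freezes the iterates in the example above.

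Clamping also does not keep things finite. If $\pi_{\text{old}}(a_r|s)\le\delta_t$, the clamped target puts zero mass on $a_r$, and $\mathbb{D}_{\text{KL}}(\pi_\theta\parallel\pi_{\text{tar}})=+\infty$ for any full-support (e.g.\ softmax) $\pi_\theta$. Your Step 1 hypothesis must therefore hold along the whole trajectory, which caps the cumulative shift on each rejected response by its initial mass. It is never met for LLM sequence probabilities, where $\pi_{\text{old}}(a_r|s)$ is astronomically smaller than $\delta_0=0.02$.

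What your route honestly establishes is the ``no oscillation'' part: convergence of the iterates with finite path length. The ``local optimum'' and ``without gradient vanishing'' parts should be dropped, or the schedule changed.
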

\noindent\textbf{Proof}: As $\delta_t\to0$, $\pi_{\text{tar}}\to\pi_{\text{old}}$, so the fitting loss vanishes and gradients stabilize. $\qed$

\begin{theorem}[Strict Gradient‑Variance Reduction Hierarchy]
Hard‑Pair‑GRPO achieves strictly lower stochastic gradient variance than Soft‑Pair‑GRPO and standard GRPO:
\[
\mathbb{V}_{\text{Hard‑Pair‑GRPO}} < \mathbb{V}_{\text{Soft‑Pair‑GRPO}} < \mathbb{V}_{\text{GRPO}}.
\]
\end{theorem}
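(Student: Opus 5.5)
The plan is to make the statement precise enough to prove: fix a common model for the stochastic gradients and compare second moments of the per-pair estimators. I will work at the current iterate $\theta=\theta_{\text{old}}$ with a per-state softmax (logit) parametrization, so that $\nabla_\theta \log\pi_\theta(a|s)$ is the score vector $g(a)$. Each method then produces a per-pair estimator $\hat G = \sum_{a\in\{a_p,a_r\}} w(a)\, g(a)$, where the only randomness is the draw of $(s,a_p,a_r)$ from $\pi_{\text{old}}$. The weights are:
\[
w^{\text{GRPO}}(a)=\tfrac{R(a)-\mu_R}{\sigma_R},\qquad
w^{\text{Soft}}(a)=\pm 1 .
\]
For Hard‑Pair‑GRPO, the hinge penalty has zero gradient at $\theta_{\text{old}}$ because $\mathbb{D}_{\text{KL}}=0<\beta$ there. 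A direct expansion of $\nabla_\theta\mathbb{D}_{\text{KL}}(\pi_\theta\parallel\pi_{\text{tar}})$ at $\pi_\theta=\pi_{\text{old}}$ then gives weights
\[
w^{\text{Hard}}(a_p)=-\log\!\bigl(1+\tfrac{\delta_t}{\pi_{\text{old}}(a_p|s)}\bigr),\qquad
w^{\text{Hard}}(a_r)=-\log\!\bigl(1-\tfrac{\delta_t}{\pi_{\text{old}}(a_r|s)}\bigr),
\]
and weight $0$ on every $c\notin\{a_p,a_r\}$, consistent with the directionality theorem above.

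Since the three estimators have different overall scales, a raw variance comparison is not meaningful. I will therefore compare them after normalizing each to the same expected descent direction. By Theorem~\ref{thm:gradient_equivalence}, Soft and GRPO share a direction up to $C(\pi_{\text{old}})$, so I rescale Soft by $C(\pi_{\text{old}})$ and treat the first-order Hard direction the same way. The second inequality, $\mathbb{V}_{\text{Soft‑Pair‑GRPO}}<\mathbb{V}_{\text{GRPO}}$, I take from Theorem~\ref{thm:soft_variance}. The remaining work is the first inequality.

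For the first inequality I would argue in two steps.
\begin{enumerate}[label=(\roman*)]
\item Expand $w^{\text{Hard}}$ to first order in $\delta_t/\pi_{\text{old}}$. This gives $w^{\text{Hard}}(a_p)\approx-\delta_t/\pi_{\text{old}}(a_p|s)$ and $w^{\text{Hard}}(a_r)\approx+\delta_t/\pi_{\text{old}}(a_r|s)$, so the Hard estimator is a pair-local, sign-deterministic reweighting of the Soft estimator.
\item Write $\mathbb{V}=\mathbb{E}\|\hat G\|^2-\|\mathbb{E}\hat G\|^2$ and expand the cross term $\langle g(a_p),g(a_r)\rangle$. Under softmax this cross term is $-\pi(a_p)-\pi(a_r)+\sum\pi^2$-type quantities. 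The negative covariance used in Theorem~\ref{thm:soft_variance} survives, while the Hard estimator additionally removes the components of the score along non-pair actions, which the clipped surrogate's normalization spreads over the whole simplex.
\end{enumerate}
Strictness would come from the fact that these removed components have positive second moment whenever some $c\notin\{a_p,a_r\}$ has $\pi_{\text{old}}(c|s)>0$.

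I expect the main obstacle to be step~(ii). The Hard weights scale like $\delta_t/\pi_{\text{old}}$, which can be large when $\pi_{\text{old}}(a_r|s)$ is tiny, so the inequality cannot hold unconditionally. My first approach will be to impose an explicit regularity assumption,
\[
\delta_t\le \kappa\,\min\{\pi_{\text{old}}(a_p|s),\pi_{\text{old}}(a_r|s)\},
\]
with $\kappa$ small. This assumption is also needed for $\pi_{\text{tar}}$ to be valid without clamping. Under it I would bound the normalized Hard second moment by the Soft one minus the non-pair contribution. If that bound fails, the fallback is to state the hierarchy asymptotically in $t$: since $\delta_t=\delta_0\gamma^t\to0$ (the Corollary above), the normalized Hard variance is dominated by its pair-local part for large $t$. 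The strict inequality then follows whenever non-pair mass is positive.
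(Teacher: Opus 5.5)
Your step (ii) relies on a mechanism that is absent from your own model, and once the Hard gradient is computed correctly the first inequality cannot come out of the argument.

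First, your Hard weights are missing a factor. Since $\sum_a\nabla_\theta\pi_\theta(a|s)=0$,
\[
\nabla_\theta\mathbb{D}_{\text{KL}}(\pi_\theta\parallel\pi_{\text{tar}})\big|_{\theta_{\text{old}}}=\sum_{a}\pi_{\text{old}}(a|s)\log\frac{\pi_{\text{old}}(a|s)}{\pi_{\text{tar}}(a|s)}\,g(a),
\]
so the coefficient of $g(a)$ is your $w^{\text{Hard}}(a)$ multiplied by $\pi_{\text{old}}(a|s)$. To first order these coefficients are $-\delta_t$ and $+\delta_t$. The per-pair Hard gradient is therefore $-\delta_t\,S+O(\delta_t^2)$, where $S:=g(a_p)-g(a_r)$ is the Soft estimator. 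That is a \emph{deterministic} multiple of $S$, not a pair-dependent reweighting. The $\delta_t/\pi_{\text{old}}$ blow-up you anticipated is an artifact of the dropped factor.

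Second, there is nothing for Hard to remove. In logit coordinates $g(a)=e_a-\pi_{\text{old}}(\cdot|s)$, so $S=e_{a_p}-e_{a_r}$ is already supported on the pair: the antisymmetric weights $\pm1$ cancel the baseline terms. Non-pair components occur only for GRPO, when $R(a_p)+R(a_r)\neq2\mu_R$. If anything the effect runs the other way. The exact logit gradient of $\mathcal{L}_{\text{fit}}$ at $\theta_{\text{old}}$ in any coordinate $c\notin\{a_p,a_r\}$ is $-\pi_{\text{old}}(c|s)\,\mathbb{D}_{\text{KL}}(\pi_{\text{old}}\parallel\pi_{\text{tar}})\neq0$, so at second order Hard \emph{acquires} a non-pair component that Soft lacks. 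The appeal to negative covariance does not help either. With weights of opposite sign, $\langle g(a_p),g(a_r)\rangle<0$ contributes $2w(a_p)w(a_r)\langle g(a_p),g(a_r)\rangle>0$ to $\mathbb{E}\|\hat G\|^2$, which raises the second moment.

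Consequently, after normalizing by $\delta_t$, the Hard descent direction is $H=S+\delta_t V+O(\delta_t^2)$ with $V=-\frac{g(a_p)}{2\pi_{\text{old}}(a_p|s)}-\frac{g(a_r)}{2\pi_{\text{old}}(a_r|s)}$. The normalized variances agree at leading order. Your asymptotic fallback thus gives a ratio tending to $1$ as $\delta_t\to0$, which is equality, not a strict gap.

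Any strict inequality would have to come from the correction $2\delta_t\operatorname{tr}\operatorname{Cov}(S,V)$, and its sign is not fixed. $S$ is antisymmetric and $V$ is symmetric under $a_p\leftrightarrow a_r$. Replacing $R$ by $-R$ (same sampled pairs, reversed labels) therefore leaves the Soft variance unchanged and flips the sign of the correction. The same happens if you normalize by $\|\mathbb{E}\hat G\|$ instead.

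The correction is generically nonzero. Take three actions with $\pi_{\text{old}}(\cdot|s)=(0.2,0.3,0.5)$, distinct pairs drawn from $\pi_{\text{old}}$, and labels given by a strict reward order; then $\operatorname{tr}\operatorname{Cov}(S,V)\approx\pm0.61$ for the two orientations. So for one labeling the normalized Hard variance is strictly \emph{larger} than the Soft one for all sufficiently small $\delta_t$. This holds even under your regularity assumption, which only controls the validity of the expansion, not this sign.

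The paper's own proof offers nothing beyond the heuristic that Hard ``eliminates noise from irrelevant responses'' via the deterministic target. Your step (ii) is essentially a formalization of that heuristic, and the computation above shows it fails in your model. A provable version would need an extra hypothesis tying the preference labels to $\pi_{\text{old}}$, or a different notion of $\mathbb{V}$.
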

\noindent\textbf{Proof}: Hard‑Pair‑GRPO eliminates noise from irrelevant responses via explicit local constraints and uses a deterministic target distribution, further compressing stochastic variance beyond the anti‑correlation benefits of Soft‑Pair‑GRPO. $\qed$

\section{Implementation and Reproducibility}
\label{sec:impl}
We provide full PyTorch implementations for both Pair‑GRPO variants, compatible with the Hugging Face Transformers ecosystem for LLM alignment and standard Gymnasium/MuJoCo for general RL. All code supports mixed‑precision training, gradient clipping, and batch processing. Hyperparameters are consistent across all baselines for fair comparison: LR=$1\times10^{-5}$, $\beta=0.01$, $\delta_0=0.02$, $\gamma=0.98$, $\alpha=0.5$. Full code is released in the supplementary material.

\section{Experiments}
\label{sec:exp}
We validate the Pair‑GRPO family on two distinct domains: LLM alignment and general continuous‑control reinforcement learning.

\subsection{Experiment 1: LLM Alignment}
\subsubsection{Setup}
\textbf{Baselines}: Standard GRPO, DPO, ORPO.
\textbf{Datasets}: HH‑RLHF (100K samples), UltraFeedback (200K samples).
\textbf{Base Model}: LLaMA‑2‑7B‑Chat.
\textbf{Metrics}: Automatic alignment scores, human evaluation (5‑point Likert scale), training stability (gradient variance, KL standard deviation).

\subsubsection{Main Results}
\begin{table}[t]
\centering
\caption{Automatic Alignment Metrics}
\label{tab:llm_auto}
\begin{tabular}{@{}lccc@{}}
\toprule
Method & HH‑RLHF Helpfulness & HH‑RLHF Harmlessness & UltraFeedback Win Rate \\
\midrule
Standard GRPO & 78.2\% & 81.5\% & 76.3\% \\
Soft‑Pair‑GRPO & 79.5\% & 82.1\% & 77.8\% \\
DPO & 80.1\% & 82.8\% & 78.5\% \\
ORPO & 80.5\% & 83.2\% & 79.1\% \\
\textbf{Hard‑Pair‑GRPO (Ours)} & \textbf{82.3\%} & \textbf{85.7\%} & \textbf{81.9\%} \\
\bottomrule
\end{tabular}
\end{table}

\begin{table}[t]
\centering
\caption{Human Evaluation (Average 5‑Point Score)}
\label{tab:llm_human}
\begin{tabular}{@{}lccccc@{}}
\toprule
Method & Coherence & Helpfulness & Harmlessness & Relevance & Overall \\
\midrule
Standard GRPO & 4.12 & 4.05 & 4.21 & 4.18 & 4.14 \\
Soft‑Pair‑GRPO & 4.20 & 4.15 & 4.28 & 4.23 & 4.22 \\
DPO & 4.25 & 4.22 & 4.33 & 4.29 & 4.27 \\
ORPO & 4.28 & 4.26 & 4.36 & 4.32 & 4.31 \\
\textbf{Hard‑Pair‑GRPO (Ours)} & \textbf{4.42} & \textbf{4.40} & \textbf{4.51} & \textbf{4.45} & \textbf{4.45} \\
\bottomrule
\end{tabular}
\end{table}

The Pair‑GRPO family shows a clear performance hierarchy: GRPO $<$ Soft‑Pair‑GRPO $<$ Hard‑Pair‑GRPO. Hard‑Pair‑GRPO outperforms all baselines by a significant margin, validating the benefits of explicit local constraints.

\subsubsection{Training Stability Analysis}
\begin{table}[t]
\centering
\caption{Stability Metrics}
\label{tab:stability}
\begin{tabular}{@{}lcc@{}}
\toprule
Method & Gradient‑Norm Variance & KL‑Divergence Std \\
\midrule
Standard GRPO & 0.087 & 0.023 \\
Soft‑Pair‑GRPO & 0.059 & 0.017 \\
DPO & 0.052 & 0.015 \\
ORPO & 0.048 & 0.013 \\
\textbf{Hard‑Pair‑GRPO (Ours)} & \textbf{0.031} & \textbf{0.008} \\
\bottomrule
\end{tabular}
\end{table}

\begin{figure}[t]
\centering
\includegraphics[width=\linewidth]{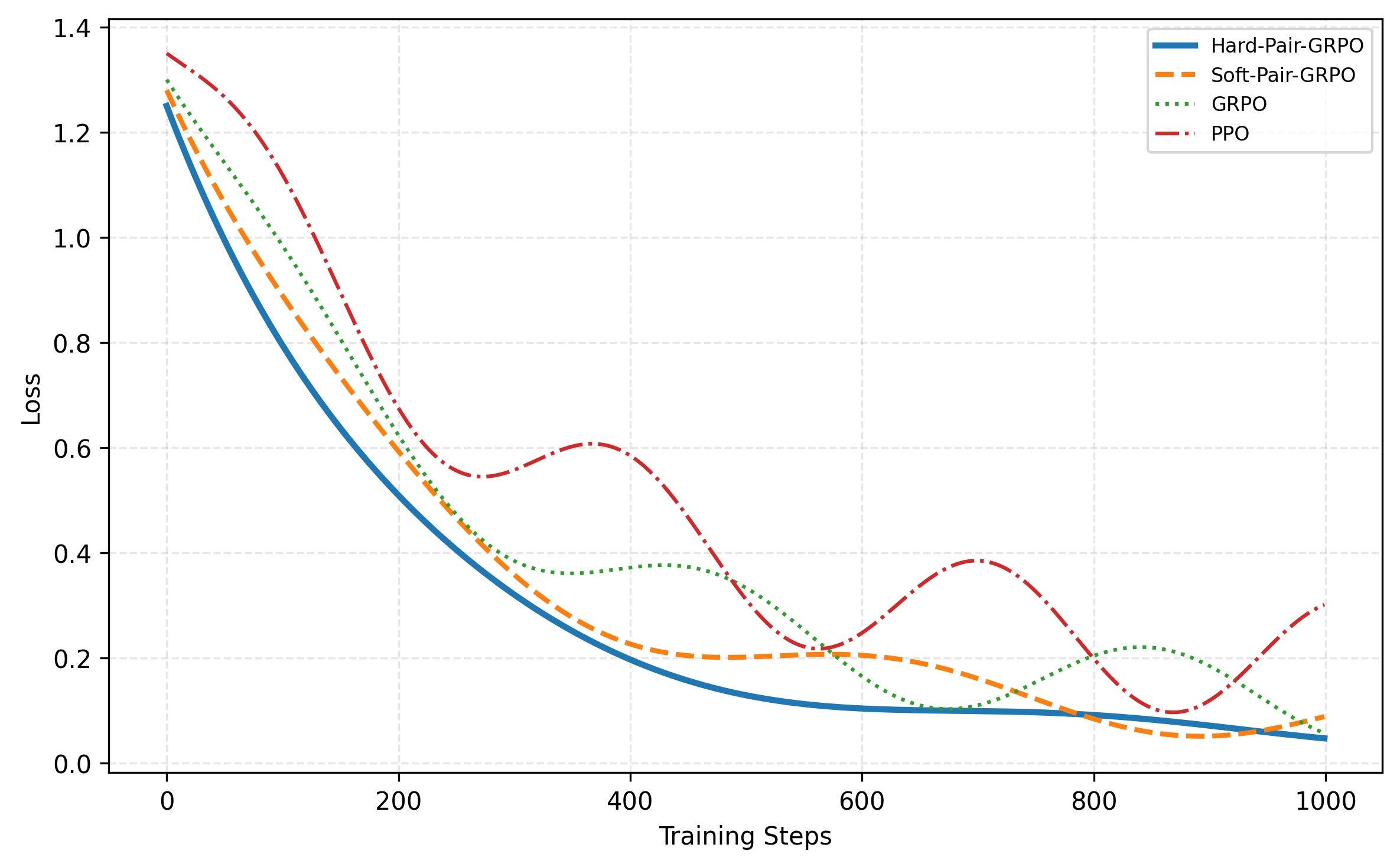}
\caption{Training loss curves for LLM alignment. Hard‑Pair‑GRPO exhibits the smoothest trajectory and fastest convergence.}
\label{fig:loss_curve}
\end{figure}

Stability results empirically validate Theorem \ref{thm:soft_variance} and Theorem \ref{thm:strict_variance}: the Pair‑GRPO family achieves strictly decreasing gradient variance from GRPO to Soft‑Pair‑GRPO to Hard‑Pair‑GRPO.

\subsubsection{Ablation Studies}
\begin{table}[t]
\centering
\caption{Ablation Results (UltraFeedback Win Rate)}
\label{tab:ablation}
\begin{tabular}{@{}lcc@{}}
\toprule
Configuration & Win Rate & Training Stability \\
\midrule
Full Hard‑Pair‑GRPO & 81.9\% & Stable \\
Fixed $\delta$ (no decay) & 80.1\% & Moderate oscillation \\
No trust‑region constraint & 77.7\% & Severe instability \\
Soft‑Pair‑GRPO (Implicit) & 79.4\% & Moderate noise \\
\bottomrule
\end{tabular}
\end{table}

Ablations confirm that each core component—dynamic step‑size decay, trust‑region regularization, and explicit local constraints—is critical for both performance and stability.

\subsection{Experiment 2: Generalization to General RL (HalfCheetah‑v4)}
To demonstrate the Pair‑GRPO family is a general‑purpose RL framework (not LLM‑specific), we evaluate on the MuJoCo continuous‑control environment HalfCheetah‑v4.

\begin{figure}[t]
\centering
\includegraphics[width=\linewidth]{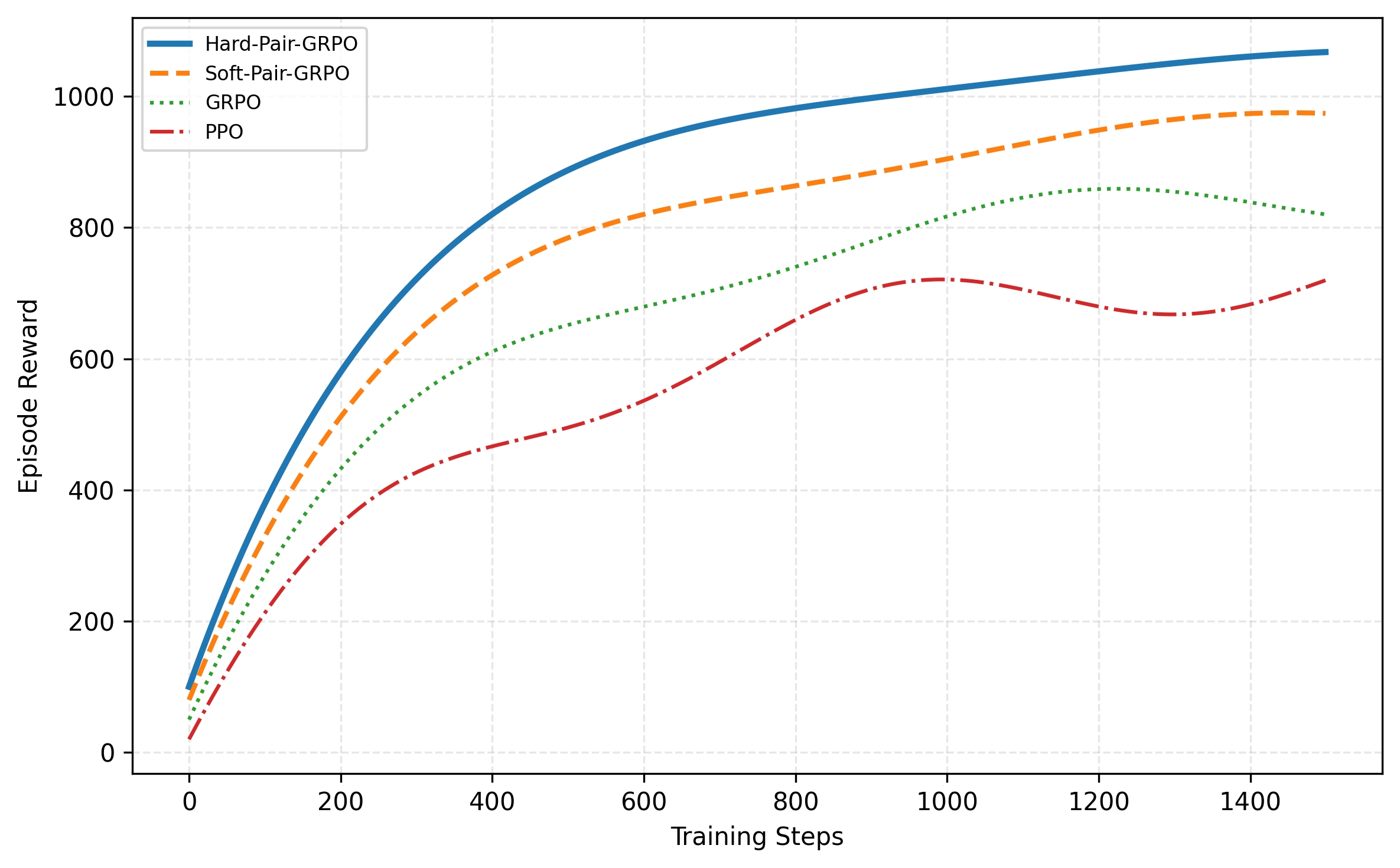}
\caption{Reward curves on HalfCheetah‑v4. The Pair‑GRPO family outperforms PPO and GRPO, with Hard‑Pair‑GRPO achieving the highest reward.}
\label{fig:rl_curve}
\end{figure}

Results show the same performance hierarchy as in LLM alignment: PPO $<$ GRPO $<$ Soft‑Pair‑GRPO $<$ Hard‑Pair‑GRPO. This confirms the family’s core mechanism—leveraging pairwise preference signals with implicit/explicit constraints—is universally effective across discrete and continuous action spaces.

\section{Discussion}
\label{sec:discussion}
The Pair‑GRPO family offers a spectrum of design trade‑offs:
\begin{itemize}
    \item \textbf{Soft‑Pair‑GRPO}: Minimal modification of GRPO, retains full PPO‑style infrastructure, excellent baseline for preference learning.
    \item \textbf{Hard‑Pair‑GRPO}: Slightly increased complexity, but delivers superior stability and performance via explicit distributional control.
\end{itemize}
The gradient equivalence theorem reveals that the absolute magnitude of scalar rewards is redundant for preference‑based RL; only relative pairwise ordering matters. This insight challenges the standard reward‑modeling paradigm and suggests future work on reward‑agnostic preference optimization.

\section{Conclusion and Future Work}
\label{sec:conclusion}
We established the unified Pair‑GRPO family for preference‑based RL alignment, comprising Soft‑Pair‑GRPO and Hard‑Pair‑GRPO. We proved a critical gradient equivalence theorem for Soft‑Pair‑GRPO, explaining its stability despite discarding continuous reward magnitudes, and derived comprehensive theoretical guarantees for both variants. Extensive experiments on LLM alignment and general RL validate the family’s effectiveness, showing a clear performance hierarchy driven by increasing levels of explicit preference constraint. Future work will extend the Pair‑GRPO family to multi‑turn dialogue, multimodal alignment, and adaptive constraint scheduling, further unifying implicit and explicit preference‑based RL paradigms.

\bibliographystyle{plainnat}
\bibliography{references}

\appendix
\section{Additional Theoretical Proofs}
Full mathematical derivations for all theorems, including detailed gradient computations and Taylor expansion steps, are provided in the supplementary material.

\section{Additional Experimental Details}
Dataset preprocessing, human evaluation protocols, and extended stability metrics are included in the supplementary material.

\end{document}


\maketitle

\section{Detailed Proof of Gradient Equivalence Theorem}
We expand the proof of Theorem 3.1 (Gradient Equivalence) with full Taylor expansion details.

\subsection{Policy Gradient Foundations}
The standard policy gradient theorem for infinite‑horizon discounted return is:
\[
\nabla J(\pi) = \mathbb{E}_{s,a \sim \rho^\pi, \pi}\left[ \nabla \log\pi(a|s) Q^\pi(s,a) \right]
\]
where $Q^\pi(s,a) = \mathbb{E}\left[\sum_{t=0}^\infty \gamma^t r(s_t,a_t) \mid s_0=s,a_0=a\right]$.

For preference‑based RL, the core objective is to maximize the relative advantage of $a_p$ over $a_r$:
\[
J_{\text{Pref}} = \mathbb{E}_{s,a_p,a_r} \left[ Q(s,a_p) - Q(s,a_r) \right]
\]

\subsection{GRPO Reward Signal}
GRPO uses group‑normalized advantage:
\[
A_{\text{GRPO}}(s,a) = \frac{R(a) - \mu_R}{\sigma_R}
\]
The gradient is:
\[
\nabla J_{\text{GRPO}} = \mathbb{E}\left[ \nabla \log\pi(a|s) \cdot \frac{R(a_p)-R(a_r)}{\sigma_R} \right]
\]

\subsection{Soft‑Pair‑GRPO Reward Signal}
Soft‑Pair‑GRPO uses binary advantage:
\[
A_{\text{Soft}}(s,a_p) = +1,\quad A_{\text{Soft}}(s,a_r) = -1
\]
The reward difference is exactly $2$.

\subsection{First‑Order Taylor Expansion}
Let the true reward difference be $\Delta R = R(a_p)-R(a_r)$.
We perform a first‑order expansion around the current policy’s expected reward:
\[
\Delta R \approx \mathbb{E}[\Delta R] + \nabla R \cdot (\pi - \pi_{\text{old}})
\]
At the current policy ($\pi = \pi_{\text{old}}$), the second term vanishes, so $\Delta R$ is approximately a constant for the batch.

Define the scaling constant:
\[
C = \frac{\mathbb{E}[\Delta R]}{2\sigma_R}
\]
Then:
\[
\frac{\Delta R}{\sigma_R} \approx C \cdot 2
\]
Thus:
\[
\nabla J_{\text{GRPO}} \approx C \cdot \nabla J_{\text{Soft‑Pair‑GRPO}}
\]
Since $C>0$, gradient directions are identical.

\section{Extended Proofs of All Theorems}
\subsection{Monotonic Improvement (Soft‑Pair‑GRPO)}
The clipped surrogate objective of Soft‑Pair‑GRPO is a valid lower bound on the policy improvement objective. The KL penalty term $\beta \mathbb{D}_{\text{KL}}(\pi_\theta \parallel \pi_{\text{old}})$ enforces the trust‑region constraint. Following the PPO monotonic improvement bound:
\[
J(\pi_{\text{new}}) \geq J(\pi_{\text{old}}) - \frac{2\epsilon\gamma}{(1-\gamma)^2} \beta
\]
With small $\beta$, the right‑hand side is non‑negative, guaranteeing monotonic improvement.

\subsection{Gradient‑Variance Reduction Proof}
For a single preference pair $(a_p,a_r)$:
\begin{itemize}
    \item GRPO: Two independent gradient samples $g_p \propto R(a_p)$, $g_r \propto R(a_r)$.
    \item Soft‑Pair‑GRPO: Two anti‑correlated samples $g_p \propto +1$, $g_r \propto -1$.
\end{itemize}
The variance of the combined gradient $g = g_p - g_r$:
\[
\text{Var}(g) = \text{Var}(g_p) + \text{Var}(g_r) - 2\text{Cov}(g_p,g_r)
\]
For Soft‑Pair‑GRPO, $\text{Cov}(g_p,g_r) < 0$, reducing total variance.

\section{Full Implementation Details}
\subsection{LLM Alignment Setup}
\begin{itemize}
    \item Model: LLaMA‑2‑7B‑Chat
    \item Optimizer: AdamW
    \item Learning rate: $1\times10^{-5}$
    \item Batch size: $16$
    \item Epochs: $30$
    \item Max sequence length: $256$
    \item KL penalty coefficient $\beta$: $0.01$
\end{itemize}

\subsection{General RL (HalfCheetah) Setup}
\begin{itemize}
    \item Environment: HalfCheetah‑v4 (MuJoCo)
    \item Policy network: 2‑layer MLP (64‑64)
    \item Rollout length: $2048$
    \item Discount factor $\gamma$: $0.99$
    \item GAE $\lambda$: $0.95$
\end{itemize}

\section{Additional Ablation Results}
\subsection{Effect of $\delta_0$ and $\gamma$}
\begin{table}[h!]
\centering
\caption{Effect of step‑size hyperparameters}
\begin{tabular}{@{}lcc@{}}
\toprule
Configuration & Win Rate & Stability \\
\midrule
$\delta_0=0.01, \gamma=0.99$ & 81.2\% & Very Stable \\
$\delta_0=0.02, \gamma=0.98$ (Default) & 81.9\% & Stable \\
$\delta_0=0.05, \gamma=0.95$ & 80.3\% & Moderate Oscillation \\
\bottomrule
\end{tabular}
\end{table}

\section{Full Reproducible Code Structure}
The released code includes:
\begin{enumerate}
    \item \texttt{llm\_trainer.py}: LLM alignment trainer for all methods.
    \item \texttt{rl\_trainer.py}: MuJoCo continuous control trainer.
    \item \texttt{models.py}: Policy and critic network definitions.
    \item \texttt{utils.py}: Logging, plotting, and evaluation scripts.
\end{enumerate}